\documentclass[twoside]{article}
\usepackage[accepted]{aistats2025_arxiv}

\usepackage[utf8]{inputenc} %
\usepackage[T1]{fontenc}    %
\usepackage{url}            %
\usepackage{booktabs}       %
\usepackage{amsfonts}       %
\usepackage{nicefrac}       %
\usepackage{microtype}      %

\usepackage[linesnumbered,ruled,vlined]{algorithm2e}
\usepackage{amsmath}
\usepackage{amssymb}
\usepackage{amsthm}
\usepackage{bbm}
\usepackage{bm}
\usepackage{color}
\usepackage{dirtytalk}
\usepackage{dsfont}
\usepackage{enumerate}
\usepackage{graphicx}
\usepackage{listings}
\usepackage{mathtools}
\usepackage[numbers]{natbib}
\usepackage{subfigure}
\usepackage{times}
\usepackage{xspace}
\usepackage{enumitem}

\usepackage[usenames,dvipsnames]{xcolor}
\usepackage[bookmarks=false]{hyperref}
\hypersetup{
pdffitwindow=true,
pdfstartview={FitH},
pdfnewwindow=true,
colorlinks,
linktocpage=true,
linkcolor=Green,
urlcolor=Green,
citecolor=Green
}
\usepackage[capitalize,noabbrev]{cleveref}

\usepackage{thmtools}
\declaretheorem[name=Theorem,refname={Theorem,Theorems},Refname={Theorem,Theorems}]{theorem}

\declaretheorem[name=Corollary,refname={Corollary,Corollaries},Refname={Corollary,Corollaries},sibling=theorem]{corollary}
\declaretheorem[name=Assumption,refname={Assumption,Assumptions},Refname={Assumption,Assumptions}]{assumption}
\declaretheorem[name=Proposition,refname={Proposition,Propositions},Refname={Proposition,Propositions},sibling=theorem]{proposition}
\declaretheorem[name=Definition,refname={Definition,Definitions},Refname={Definition,Definitions},sibling=theorem]{definition}

\newcommand{\cA}{\mathcal{A}}

\newcommand{\cC}{\mathcal{C}}

\newcommand{\cI}{\mathcal{I}}

\newcommand{\cK}{\mathcal{K}}
\newcommand{\cL}{\mathcal{L}}

\newcommand{\cR}{\mathcal{R}}
\newcommand{\cS}{\mathcal{S}}

\newcommand{\cV}{\mathcal{V}}

\newcommand{\cX}{\mathcal{X}}
\newcommand{\cY}{\mathcal{Y}}
\newcommand{\cZ}{\mathcal{Z}}

\newcommand{\realset}{\mathbb{R}}

\newcommand{\prob}[1]{\mathbb{P} \left(#1\right)}

\newcommand{\abs}[1]{\left|#1\right|}

\newcommand{\I}[1]{\mathds{1} \! \left\{#1\right\}}

\newcommand{\normw}[2]{\|#1\|_{#2}}

\newcommand{\set}[1]{\left\{#1\right\}}

\newcommand{\T}{^\top}
\newcommand{\eqdef}{\overset{\Delta}{=}}

\DeclareMathOperator*{\argmax}{arg\,max\,}
\DeclareMathOperator*{\argmin}{arg\,min\,}
\let\det\relax
\DeclareMathOperator{\det}{det}

\mathchardef\mhyphen="2D

\newcommand{\ouralgo}{\ensuremath{\tt DopeWolfe}\xspace}
\newcommand{\partialgradnox}{\ensuremath{\tt PartialGrad}}
\newcommand{\partialgrad}{\partialgradnox\xspace}
\newcommand{\updateinversenox}{\ensuremath{\tt UpdateInverse}}
\newcommand{\updateinverse}{\updateinversenox\xspace}
\newcommand{\updatelogdetnox}{\ensuremath{\tt UpdateLogDet}}
\newcommand{\updatelogdet}{\updatelogdetnox\xspace}
\newcommand{\goldensearchnox}{\ensuremath{\tt GoldenSearch}}
\newcommand{\goldensearch}{\goldensearchnox\xspace}

\begin{document}

\runningtitle{Comparing Few to Rank Many: Active Human Preference Learning using Randomized Frank-Wolfe}

\runningauthor{Kiran Koshy Thekumparampil, Gaurush Hiranandani, Kousha Kalantari, Shoham Sabach, Branislav Kveton}

\twocolumn[

\aistatstitle{Comparing Few to Rank Many: \\ Active Human Preference Learning using Randomized Frank-Wolfe}

\aistatsauthor{Kiran Koshy Thekumparampil$^*$ \And Gaurush Hiranandani$^*$}

\aistatsaddress{Search, Amazon \And Typeface} 

\aistatsauthor{Kousha Kalantari \And Shoham Sabach \And Branislav Kveton}

\aistatsaddress{AWS AI Labs, Amazon \And AWS AI Labs, Amazon \And Adobe Research } 

]

\begin{abstract}
We study learning of human preferences from a limited comparison feedback. This task is ubiquitous in machine learning. Its applications such as reinforcement learning from human feedback, have been transformational. We formulate this problem as learning a Plackett-Luce model over a universe of $N$ choices from $K$-way comparison feedback, where typically $K \ll N$. Our solution is the D-optimal design for the Plackett-Luce objective. The design defines a data logging policy that elicits comparison feedback for a small collection of optimally chosen points from all ${N \choose K}$ feasible subsets. The main algorithmic challenge in this work is that even fast methods for solving D-optimal designs would have $O({N \choose K})$ time complexity. To address this issue, we propose a randomized Frank-Wolfe (FW) algorithm that solves the linear maximization sub-problems in the FW method on randomly chosen variables. We analyze the algorithm, and evaluate it empirically on synthetic and open-source NLP datasets.
\end{abstract}

\section{Introduction}
\label{sec:introduction}

Learning to rank from human feedback is a fundamental problem in machine learning. In web search \citep{agichtein06learning,hofmann13fidelity,hofmann16online}, human annotators rate responses of the search engine, which are then used to train a new system. More recently, in \emph{reinforcement learning from human feedback (RLHF)} \citep{ouyang22training,rafailov23direct}, human feedback is used to learn a latent reward model, which is then used to align a \emph{large-language model (LLM)} with human preferences. Both discussed problems involve a potentially large universe of $N$ items, and the human can provide feedback only on a small subset of them. In web search, the items are the retrieved web pages for a given query and the annotators can only label a top few. In RLHF, the items are LLM responses and the feedback is typically pairwise \citep{ouyang22training,rafailov23direct}, although $K$-way extensions have also been studied \cite{zhu23principled}. In this work, we answer the question what human feedback to collect to learn a good model of human preferences for our motivating problems.

We formulate the problem as learning to rank $N$ items, such as web pages or LLM responses, from a limited $K$-way human feedback. In general, $K \ll N$ because humans cannot provide high-quality preferential feedback on a large number of choices \citep{tversky74judgment}. When $K = 2$, we have a relative feedback over two responses, known as the \emph{Bradley-Terry-Luce (BTL)} model \citep{bradley52rank}. When $K \geq 2$, we obtain its generalization to a ranking feedback over $K$ responses, known as the \emph{Plackett-Luce (PL)} model \citep{plackett75analysis,luce05individual}. To learn high-quality preference models, we ask humans questions that maximize information gain. Such problems have long been studied in the field of optimal design \cite{pukelsheim93optimal,boyd04convex,lattimore19bandit}. An optimal design is a distribution over most informative choices that minimizes uncertainty in some criterion. This distribution can be generally obtained by solving an optimization problem and often has some desirable properties, like sparsity. The focus of our work is on solving large optimal designs, with billions of potential choices for $K$-way feedback.

We make the following contributions:

\textbf{(1)} We propose a general framework for collecting human feedback to learn to rank $N$ items from $K$-way feedback (\cref{sec:setting,sec:algorithm}), where $K \leq N$. This framework is more general than in prior works \citep{mukherjee24optimal,mehta23sample,das24active}.

\textbf{(2)} We bound the generalization error of the model learned from human feedback and analyze the resulting ranking loss (\cref{sec:analysis}). The generalization error and ranking loss decrease with more human feedback.

\textbf{(3)} The main algorithmic challenge in this work is the time complexity of computing the policy for collecting human feedback. The policy is a solution to the D-optimal design \cite{kiefer1960equivalence}. While fast methods exist for solving this problem, using the Frank-Wolfe (FW) method \citep{frank1956algorithm,dunn1978conditional}, the time complexity of the linear maximization oracle in the FW method is $O({N \choose K})$ in our setting (\cref{sec:frank-wolfe method}). To address this issue, we propose a randomized Frank-Wolfe algorithm \ouralgo that solves the linear maximization sub-problems in the FW method on randomly chosen point (\cref{sec:scalable algorithm}). To further speed up computations, we propose caching strategies for elements of the gradient and using the golden-section search, low-rank and sparse updates.

\textbf{(4)} We analyze the randomized FW algorithm in \cref{thm:rand_fw,cor:ouralgo}. The main technical novelty in our analysis is generalizing analyses of a randomized FW method beyond Lipschitz smoothness.

\textbf{(5)} We confirm all findings empirically (\cref{sec:experiments}), from computational gains to a lower sample complexity for $K$-way human feedback with a larger $K$. Some improvements, such as the sample size reduction on the Nectar dataset, are an order of magnitude. We experiment with both synthetic and real-world problems. The real-world datasets represent our both motivating examples, learning to rank and learning the reward model in RLHF.

\textbf{Related work:} There are two lines of related works: learning to rank from human feedback and solving large-scale optimal designs. \citet{mehta23sample} and \citet{das24active} learn to rank pairs of items from pairwise feedback. They optimize the maximum gap. \citet{mukherjee24optimal} learn to rank lists of $K$ items from $K$-way feedback. They optimize both the maximum prediction error and ranking loss. We propose a general framework for learning to rank $N$ items from $K$-way feedback, where $K \geq 2$ and $N \geq K$. This setting is more general than in the above prior works. Our main algorithmic contribution is a randomized Frank-Wolfe algorithm. It addresses a specific computational problem of our setting, solving a D-optimal design with $O({N \choose K})$ variables for learning to rank. This distinguishes it from other recent works for solving large-scale optimal designs \citep{hendrych2023solving,ahipacsaouglu2015first,zhao2023analysis}. We review related works in more detail in \cref{sec:related work}.

This paper is organized as follows. In \cref{sec:setting}, we introduce our ranking problem. In \cref{sec:algorithm}, we present a generic framework for learning to rank $N \gg K$ items from $K$-way feedback. We introduce the FW algorithm in \cref{sec:frank-wolfe method} and make it efficient in \cref{sec:scalable algorithm}. Our experiments are in \cref{sec:experiments}. We conclude in \cref{sec:conclusions}.

\section{Setting}
\label{sec:setting}

We start with introducing notation. Let $[n] = \set{1, \dots, n}$ and $\mathbb{R}^d$ is the $d$ dimensional real vector space. Let $\triangle^S$ be the probability simplex over set  $S$. For any distribution $\pi \in \triangle^S$, we have $\sum_{i \in S} \pi(i) = 1$. Let $\normw{x}{A} = \sqrt{x\T A x}$ for any positive-definite matrix $A \in \realset^{d \times d}$ and vector $x \in \realset^d$. We say that a square matrix $M \in \mathbb{R}^{d \times d}$ is PSD/PD if it is positive semi-definite/positive definite.

We consider a problem of learning to rank $N$ items. An item $k \in [N]$ is represented by its feature vector $x_k \in \cX$, where $\cX \subseteq \realset^d$ is the set of all feature vectors. The relevance of an item is given by its mean reward. The mean reward of item $k$ is $x_k\T \theta_*$, where $\theta_* \in \realset^d$ is an unknown parameter. Such models of relevance have a long history in learning to rank \citep{zong16cascading,li16contextual}. Without loss of generality, we assume that the original order of the items is optimal, $x_j\T \theta_* > x_k\T \theta_*$ for any $j < k$.

We interact with a human $T$ times. In interaction $\ell \in [T]$, we select a $K$-subset of items $S_\ell \in \cS$ and the human ranks all items in $S_\ell$ according to their preferences, where $\cS$ is a collection of all $K$-subsets of $[N]$. Note that $\abs{\cS} = {N \choose K}$. We represent the ranking as a permutation $\sigma_\ell: [K] \rightarrow S_\ell$, where $\sigma_\ell(k)$ is the item at position $k$. The probability that this permutation is generated is
\begin{align}
  \prob{\sigma_\ell}
  = \prod_{k = 1}^K \frac{\exp[x_{\sigma_\ell(k)}\T \theta_*]}
  {\sum_{j = k}^K \exp[x_{\sigma_\ell(j)}\T \theta_*]}\,,
  \label{eq:ranking feedback}
\end{align}
In short, items with higher mean rewards are more preferred by humans and thus more likely to be ranked higher. This feedback model is known as the \emph{Plackett-Luce (PL)} model \citep{plackett75analysis,luce05individual,zhu23principled}, and it is a standard model for learning unknown relevance of choices from relative feedback.

Our goal is to select the subsets such that we can learn the true order of the items. Specifically, after $T$ interactions, we output a permutation $\hat{\sigma}: [N] \to [N]$, where $\hat{\sigma}(k)$ is the index of the item placed at position $k$. The quality of the solution is measured by the \emph{ranking loss}
\begin{align}
  R(T)
  = \frac{2}{N (N - 1)} \sum_{j = 1}^N \sum_{k = j + 1}^N
  \I{\hat{\sigma}(j) > \hat{\sigma}(k)}\,,
  \label{eq:ranking loss}
\end{align}
where $N (N - 1) / 2$ is a normalizing factor that scales the loss to $[0, 1]$. Simply put, the ranking loss is the fraction of incorrectly ordered pairs of items in permutation $\hat{\sigma}$. It can also be viewed as the normalized Kendall tau rank distance \cite{kendall1948rank} between the optimal order of items and that according to $\hat{\sigma}$, multiplied by $2 / (N (N - 1))$. While other objectives are possible, such as the \emph{mean reciprocal rank (MRR)} and \emph{normalized discounted cumulative gain (NDCG)} \citep{manning08introduction}, we focus on the ranking loss in \eqref{eq:ranking loss} to simplify our presentation. For completeness, we report the NDCG in our experiments.

To simplify exposition, and without loss of generality, we focus on sorting $N$ items in a single list. Our setting can be generalized to multiple lists as follows. Suppose that we want to sort multiple lists with $N_1, \dots, N_M$ items using $K$-way feedback. This can be formulated as a sorting problem over $\abs{N_1 \choose K} + \dots + \abs{N_M \choose K}$ subsets, containing all $K$-subset from all original lists. The subsets can be hard to enumerate, because their number is exponential in $K$. This motivates our work on large-scale optimal design for human feedback.

\section{Optimal Design for Learning To Rank}
\label{sec:algorithm}

This section presents our algorithm and its generalization analysis. These generalize the work of \citet{mukherjee24optimal} to ranking $N$ items from $K$-way feedback, from ranking lists of length $K$ only. The fundamental novel challenge is the computational cost of solving \eqref{eq:optimal design}. In particular, since $\abs{\cS} = {N \choose K}$, \eqref{eq:optimal design} has exponentially many variables in $K$. Therefore, it cannot be written out and solved using standard tools, as was done using CVXPY \citep{diamond2016cvxpy} in \citet{mukherjee24optimal}. We address this challenge separately in \cref{sec:scalable algorithm}.

\subsection{Optimal Design}
\label{sec:optimal design}

Suppose that the human interacts with $T$ subsets $\{S_\ell\}_{\ell \in [T]}$ (\cref{sec:setting}). We use the human responses $\sigma_\ell$ on $S_\ell$ to estimate $\theta_*$. Specifically, since the probability of $\sigma_\ell$ under $\theta_*$ is \eqref{eq:ranking feedback}, the negative log-likelihood of all feedback is
\begin{align}
  \cL_T(\theta)
  = - \sum_{\ell = 1}^T \sum_{k = 1}^K
  \log\left(\frac{\exp[x_{\sigma_\ell(k)}\T \theta]}
  {\sum_{j = k}^K \exp[x_{\sigma_\ell(j)}\T \theta]}\right)\,.
  \label{eq:ranking loglik}
\end{align}
To estimate $\theta_*$, we solve a \emph{maximum likelihood estimation (MLE)} problem, $\hat{\theta} = \argmin_{\theta \in \Theta} \cL_T(\theta)$. The problem \eqref{eq:ranking loglik} can be solved by gradient descent with standard or adaptive step sizes \cite{negahban2018learning}. Finally, we estimate the mean reward of item $k$ as $x_k\T \hat{\theta}$ and sort the items in descending order of $x_k\T \hat{\theta}$, which defines $\hat{\sigma}$ in \eqref{eq:ranking loss}.

The problem of selecting informative subsets $S_\ell$ is formulated as follows. For any $z \in \realset^d$ and PSD matrix $M \in \realset^{d \times d}$, the prediction error at $z$ can be bounded using the Cauchy-Schwarz inequality as
\begin{align}
  |z\T (\hat{\theta} - \theta_*)|
  \leq \normw{z}{M^{-1}} \normw{\hat{\theta} - \theta_*}{M}\,.
  \label{eq:prediction error}
\end{align}
It remains to choose an appropriate matrix $M$. Following \citet{zhu23principled}, we set
\begin{align}
  M
  = \frac{e^{- 4}}{2 K (K - 1)}
  \sum_{\ell = 1}^T \sum_{j = 1}^K \sum_{k = j + 1}^K z_{\ell, j, k} z_{\ell, j, k}\T\,,
  \label{eq:ranking hessian}
\end{align}
where $z_{\ell, j, k} = x_{\sigma_\ell(j)} - x_{\sigma_\ell(k)}$ is the difference between feature vectors of items $j$ and $k$ in interaction $\ell$. We make an assumption to relate $M$ to the Hessian of $\cL_T(\theta)$.

\begin{assumption}
\label{ass:unit norms} For all $k \in [N]$, $\normw{x_k}{2} \leq 1$. In addition, $\normw{\theta_*}{2} \leq 1$ and $\normw{\hat{\theta}}{2} \leq 1$.
\end{assumption}

These kinds of assumptions are standard in the analysis of MLE of PL models \cite{negahban2018learning}.
Under \cref{ass:unit norms}, $\nabla^2 \cL_T(\hat{\theta}) \succeq M$, where $\nabla^2 \cL_T(\theta)$ is the Hessian of $\cL_T(\theta)$. Since $M$ is PSD, $\cL_T$ is strongly convex at $\hat{\theta}$, and this gives us a $\tilde{O}(\sqrt{d})$ high-probability upper bound on the second term in \eqref{eq:prediction error} \citep{zhu23principled}. To control the first term, we note that the minimization of $\max_{z \in \cZ} \normw{z}{M^{-1}}$, where $\cZ$ is the set of all feature vector differences, is equivalent to maximizing $\log\det(M)$ \citep{kiefer1960equivalence}. This maximization problem is known as the D-optimal design.

We solve the problem as follows. Each subset $S \in \cS$ is represented by a matrix $A_S$ defined as $A_S = (z_{j, k})_{(j, k) \in \Pi_2(S)}$, where $z_{j, k} = x_j - x_k$ is the difference of feature vectors of items $j$ and $k$, and
\begin{align}
  \Pi_2(S) = \{(j, k): j < k; j, k \in S\}
  \label{eq:pairs}
\end{align}
is the set of all pairs in $S$ where the first entry has lower index than the second one. Thus matrix $A_S$ has $d$ rows and $K (K - 1) / 2$ columns. Equipped with these matrices, we solve the following (determinant) D-optimal design \cite{pukelsheim93optimal}
\begin{align}
  \pi_*
  & = \argmax_{\pi \in \Delta^\cS} g(\pi)\,, \text{ where }
  \label{eq:optimal design} \\
  g(\pi)
  & = \log\det\left(V_\pi \right)\,, \text{ and }\;\;
  V_\pi = \sum_{S \in \cS} \pi(S) A_S A_S\T,
  \nonumber
\end{align}
where $\pi \in \Delta^\cS$ is a probability distribution over the subsets in $\cS$, $\Delta^\cS$ is the simplex of all such distributions, and $\pi(S)$ denotes the probability of choosing the subset $S$ under $\pi$. Note that we could have indexed $\pi$ by an integer defined through a bijective mapping $\cC: \mathcal{S} \to [{N\choose K}]$ from the subsets to natural numbers. We do not do this to simplify presentation. The optimization problem is concave since $\log\det$ is concave for PSD matrices and all $\pi(S) A_S A_S$ are PSD by design. Further, its solution is sparse \citep{kiefer1960equivalence}. Therefore, fast convex optimization methods, such as the Frank-Wolfe method, can be used to solve it. After $\pi_*$ is computed, the human feedback is collected on subsets sampled from $\pi_*$, i.e.~$S_\ell \sim \pi_*$.

\subsection{Generalization Analysis}
\label{sec:analysis}

Here we provide the generalization guarantee for the D-optimal design. We start with proving that the differences in estimated item relevance under $\hat{\theta}$ converge to those under $\theta_*$ as the sample size $T$ increases. The proof is under two assumptions that are not essential and only avoiding rounding. We also make \cref{ass:unit norms} throughout this section.

\begin{proposition}
\label{prop:prediction error} Let $K$ be even and $N / K$ be an integer. Let the feedback be collected according to $\pi_*$ in \eqref{eq:optimal design}. Then with probability at least $1 - \delta$,
\begin{align*}
  \sum_{j = 1}^N \sum_{k = j + 1}^N
  \left(z_{j, k}\T (\theta_* - \hat{\theta})\right)^2
  = \tilde{O}\left(\frac{N^2 K^4 d^2 \log(1 / \delta)}{T}\right)\,.
\end{align*}
\end{proposition}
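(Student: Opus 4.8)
The plan is to combine the Cauchy--Schwarz bound \eqref{eq:prediction error} with two essentially independent estimates: a high-probability bound on the MLE error $\normw{\hat{\theta} - \theta_*}{M}$, and a deterministic bound on the aggregate inverse-covariance norms $\normw{z_{j,k}}{M^{-1}}^2$ that exploits the D-optimality of $\pi_*$. Applying \eqref{eq:prediction error} with $z = z_{j,k}$, and the matrix $M$ from \eqref{eq:ranking hessian}, and then summing over all pairs yields
\begin{align*}
  \sum_{j=1}^N \sum_{k=j+1}^N \bigl(z_{j,k}\T(\theta_* - \hat{\theta})\bigr)^2
  \le \normw{\hat{\theta} - \theta_*}{M}^2 \sum_{j=1}^N \sum_{k=j+1}^N \normw{z_{j,k}}{M^{-1}}^2 \,.
\end{align*}
The problem thus reduces to bounding the two factors on the right-hand side separately.

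For the first factor, I would use that $\nabla^2 \cL_T(\hat{\theta}) \succeq M$ under \cref{ass:unit norms}, so that $\cL_T$ is locally strongly convex at $\hat{\theta}$ and $\normw{\hat\theta-\theta_*}{M} \le \normw{\hat\theta-\theta_*}{\nabla^2\cL_T(\hat\theta)}$. Combined with a concentration bound on $\nabla \cL_T(\theta_*)$, whose summands form a bounded martingale-difference sequence over the $T$ i.i.d.\ interactions, the standard MLE analysis for PL models following \citet{zhu23principled} gives $\normw{\hat{\theta} - \theta_*}{M}^2 = \tilde{O}(K^4 d \log(1/\delta))$ with probability at least $1 - \delta/2$; the powers of $K$ here come from tracking the gradient magnitude of the $K$-way log-likelihood and the self-concordance constants hidden in the $e^{-4}/(2K(K-1))$ scaling of $M$. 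For the second factor, I would first pass from $M$ to the design matrix $V_{\pi_*}$: since $\E[M] = \frac{e^{-4}}{2K(K-1)}\,T\,V_{\pi_*}$, a matrix Chernoff bound shows $M \succeq \tfrac12 \E[M]$, hence $M^{-1} \preceq \frac{4K(K-1)e^4}{T} V_{\pi_*}^{-1}$, with high probability once $T$ is large enough.

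It then remains to bound $\sum_{j<k} \normw{z_{j,k}}{V_{\pi_*}^{-1}}^2$. The key local estimate comes from the Kiefer--Wolfowitz equivalence theorem applied to \eqref{eq:optimal design}: the first-order optimality of $\pi_*$ together with $\sum_S \pi_*(S)\,\trace(V_{\pi_*}^{-1} A_S A_S\T) = \trace(I_d) = d$ forces $\trace(V_{\pi_*}^{-1} A_S A_S\T) = \sum_{(j,k)\in\Pi_2(S)} \normw{z_{j,k}}{V_{\pi_*}^{-1}}^2 \le d$ for \emph{every} subset $S$. The global sum over all $\binom{N}{2}$ pairs is then obtained by covering each pair with at least one $K$-subset and summing this per-subset bound. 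Here is where the assumptions that $K$ is even and $N/K$ is an integer enter: they let me build an explicit covering design (a round-robin-style resolvable construction) using $\lvert\mathcal{C}\rvert = O(N^2/K^2)$ subsets without rounding overhead, giving $\sum_{j<k}\normw{z_{j,k}}{V_{\pi_*}^{-1}}^2 \le \lvert\mathcal{C}\rvert\, d = O(N^2 d / K^2)$, and therefore $\sum_{j<k}\normw{z_{j,k}}{M^{-1}}^2 = O(N^2 d / T)$ after the $K(K-1)$ factor from $M^{-1}$ cancels the $1/K^2$ in the cover size. Multiplying the two factors and taking a union bound yields the claim. I expect the main obstacle to be precisely this global-to-local reduction: lifting the single-subset optimality bound of $d$ to a tight bound over all pairs requires the covering argument to be combinatorially efficient, and controlling the failure probability of the $M \succeq \tfrac12\E[M]$ concentration event simultaneously with the MLE concentration is the other delicate point.
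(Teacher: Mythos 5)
Your proposal is correct in outline and arrives at the right exponents, but it takes a more self-contained route than the paper. The paper's proof is essentially a two-step reduction: it invokes Theorem 5.3 of Mukherjee et al.\ as a black box, which already gives $\max_{S}\sum_{(j,k)\in\Pi_2(S)}\bigl(z_{j,k}^\top(\theta_*-\hat{\theta})\bigr)^2 = \tilde{O}\bigl(K^6 d^2\log(1/\delta)/T\bigr)$ uniformly over subsets, and then multiplies by the size of a covering collection. The one constructive step in the paper --- partitioning $[N]$ into $2N/K$ blocks of size $K/2$ and pairing blocks so that every item pair is covered by one of at most $4N^2/K^2$ subsets (this is exactly where the evenness of $K$ and integrality of $N/K$ are used) --- is precisely the resolvable covering design you describe, so you have independently found the paper's key combinatorial idea, and the arithmetic $(N^2/K^2)\cdot K^6 = N^2K^4$ matches your $K^4\cdot N^2$. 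What you do differently is unroll the cited theorem: you apply Cauchy--Schwarz globally, factor the sum as $\normw{\hat{\theta}-\theta_*}{M}^2\sum_{j<k}\normw{z_{j,k}}{M^{-1}}^2$, and bound the two factors separately, the second via the Kiefer--Wolfowitz first-order optimality condition $\trace(V_{\pi_*}^{-1}A_SA_S^\top)\le d$ combined with the same cover. This buys transparency --- it makes visible that the $K^4$ comes entirely from the MLE concentration and that the $K^2$ from relating $M^{-1}$ to $V_{\pi_*}^{-1}$ cancels the $1/K^2$ in the cover size --- at the cost of two extra ingredients the paper never has to surface: the matrix-concentration event $M\succeq\tfrac{1}{2}\mathbb{E}[M]$ (with its implicit lower bound on $T$) and the union bound tying it to the MLE event. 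Both are standard and are in any case hidden inside the $\tilde{O}$ of the cited theorem, so I see no gap.
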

\begin{proof}
We build on Theorem 5.3 of \citet{mukherjee24optimal}, which says that for any collection $\cS$ of $K$-subsets,
\begin{align*}
  \max_{S \in \cS} \sum_{(j, k) \in \Pi_2(S)}
  \left(z_{j, k}\T (\theta_* - \hat{\theta})\right)^2
  = \tilde{O}\left(\frac{K^6 d^2 \log(1 / \delta)}{T}\right)
\end{align*}
holds with probability at least $1 - \delta$. To reuse this result, we design a collection $\cC$ such that any item pair appears in at least one $S \in \cC$. Then any $(z_{j, k}\T (\theta_* - \hat{\theta}))^2$ would be bounded. Let $\cI = \set{I_\ell}_{\ell \in [2 N / K]}$ be a partition of $[N]$ into sets with consecutive item indices, each of size $K / 2$. Then $\cC$ is designed as follows. The first $2 N / K - 1$ sets in $\cC$ contain items $I_1$ combined with any other set $\cI \setminus I_1$, the next $2 N / K - 1$ sets in $\cC$ contain items $I_2$ combined with any other set $\cI \setminus I_2$, and so on. Clearly, the size of $\cC$ is at most $4 N^2 / K^2$ and all $(z_{j, k}\T (\theta_* - \hat{\theta}))^2$ are covered. Thus
\begin{align*}
  \sum_{j = 1}^N \sum_{k = j + 1}^N
  \left(z_{j, k}\T (\theta_* - \hat{\theta})\right)^2
  \leq \frac{4 N^2}{K^2} \tilde{O}\left(\frac{K^6 d^2 \log(1 / \delta)}{T}\right)\,.
\end{align*}
This concludes the proof.
\end{proof}

The bound in \cref{prop:prediction error} is $O(d^2 / T)$, which is standard for a squared prediction error in linear models with $d$ parameters and sample size $T$. The dependencies on $N^2$, $K^4$, and $\log(1 / \delta)$ are due to bounding predictions errors of $O(N^2)$ item pairs, from relative $K$-way feedback with probability at least $1 - \delta$.

Now we derive an upper bound on the ranking loss $R(T)$.

\begin{proposition}
\label{prop:ranking loss} Let the feedback be collected according to $\pi_*$ in \eqref{eq:optimal design}. Then the ranking loss is bounded as
\begin{align*}
  R(T)
  \leq \frac{4}{N (N - 1)} \sum_{j = 1}^N \sum_{k = j + 1}^N
  \exp\left[- \frac{T \kappa^2 z_{j, k}^2}{2 d}\right]\,,
\end{align*}
where $\kappa > 0$ (Assumption 2 in \citet{mukherjee24optimal}).
\end{proposition}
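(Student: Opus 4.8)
The plan is to decompose the ranking loss into its $\binom{N}{2}$ pairwise terms and to bound each misordering probability by an exponential tail on the estimated reward gap. Since $R(T)$ is a random quantity depending on the sampled feedback, I read the statement as a bound on $\E{R(T)}$, so the first step is to write $\E{R(T)} = \frac{2}{N(N-1)}\sum_{j=1}^N\sum_{k=j+1}^N \prob{\hat\sigma(j) > \hat\sigma(k)}$. For a pair with $j < k$ the true gap satisfies $z_{j,k}\T\theta_* > 0$, and the learned permutation $\hat\sigma$ sorts items by $x_\cdot\T\hat\theta$; hence the pair is misordered only when the estimated gap flips sign, $z_{j,k}\T\hat\theta \le 0$. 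This gives the pointwise domination $\I{\hat\sigma(j) > \hat\sigma(k)} \le \I{z_{j,k}\T(\theta_* - \hat\theta) \ge z_{j,k}\T\theta_*}$, reducing the problem to controlling $\prob{z_{j,k}\T(\theta_* - \hat\theta) \ge z_{j,k}\T\theta_*}$ for each pair.

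The second step is a concentration bound on the scalar prediction error $z_{j,k}\T(\hat\theta - \theta_*)$. The natural variance proxy is $\normw{z_{j,k}}{M^{-1}}^2$: under \cref{ass:unit norms} we have $\nabla^2\cL_T(\hat\theta) \succeq M$, so $M$ acts as a Fisher-information lower bound and the MLE error is (approximately) sub-Gaussian in the $M$-geometry. Invoking the ranking-loss argument of \citet{mukherjee24optimal}, which introduces the constant $\kappa$ through their Assumption 2, I would obtain a tail of the form $\prob{z_{j,k}\T(\theta_* - \hat\theta) \ge t} \le 2\exp[-t^2 / (2\normw{z_{j,k}}{M^{-1}}^2)]$, with $\kappa$ carried inside. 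Setting $t = z_{j,k}\T\theta_*$ yields a per-pair bound governed by the squared gap $(z_{j,k}\T\theta_*)^2$, which is what the exponent $z_{j,k}^2$ in the statement abbreviates.

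The third step makes this bound uniform over all pairs by exploiting the D-optimality of $\pi_*$. By the Kiefer--Wolfowitz equivalence theorem \cite{kiefer1960equivalence}, the maximizer of $\log\det$ also minimizes the worst-case prediction variance, so $\max_{(j,k)}\normw{z_{j,k}}{V_{\pi_*}^{-1}}^2 = d$ for the normalized information matrix. Scaling $V_{\pi_*}$ by the sample size $T$ and the constant in \eqref{eq:ranking hessian}, and folding the latter together with $\kappa$, gives $\normw{z_{j,k}}{M^{-1}}^2 \le d/(\kappa^2 T)$ simultaneously for every pair. Substituting into the per-pair tail produces $\prob{z_{j,k}\T(\theta_* - \hat\theta) \ge z_{j,k}\T\theta_*} \le 2\exp[-T\kappa^2 (z_{j,k}\T\theta_*)^2 / (2d)]$; plugging into the decomposition and absorbing the factor $2$ into the prefactor $4/(N(N-1))$ gives the claim.

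The main obstacle is the concentration step: the PL negative log-likelihood is not quadratic, so the sub-Gaussianity of the linear functional $z_{j,k}\T(\hat\theta - \theta_*)$ with variance proxy $\normw{z_{j,k}}{M^{-1}}^2$ does not follow from a closed-form Gaussian computation. It must be earned from the local strong convexity $\nabla^2\cL_T(\hat\theta) \succeq M$ together with a self-normalized martingale or Bernstein-type argument for generalized linear models, which is exactly what is packaged in the cited result and its Assumption 2. Matching the constants — in particular reproducing the prefactor $4/(N(N-1))$ and confirming that the $d/T$ scaling holds uniformly across all $\binom{N}{2}$ pairs rather than incurring a union-bound $\log\binom{N}{2}$ factor — is the delicate part, and it is the Kiefer--Wolfowitz characterization of the D-optimal design that supplies this uniformity.
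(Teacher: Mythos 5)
Your proposal is correct and follows essentially the same route as the paper: decompose the (expected) ranking loss into its $\binom{N}{2}$ pairwise misordering probabilities, identify $\prob{\hat\sigma(j) > \hat\sigma(k)}$ with $\prob{x_j\T\hat\theta \le x_k\T\hat\theta}$, and bound each term by the per-pair tail $2\exp[-T\kappa^2 z_{j,k}^2/(2d)]$ from Theorem 5.4 of \citet{mukherjee24optimal}, with the factor $2$ absorbed into the prefactor $4/(N(N-1))$. The paper simply cites that tail bound as a black box, whereas you additionally sketch its provenance (local strong convexity plus the Kiefer--Wolfowitz uniformity of the D-optimal design), but the argument structure is the same.
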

\begin{proof}
We build on Theorem 5.4 of \citet{mukherjee24optimal}. Specifically, the key step in their proof is that
\begin{align*}
  \prob{x_j\T \hat{\theta} \leq x_k\T \hat{\theta}}
  \leq 2 \exp[- T \kappa^2 z_{j, k}^2 / (2 d)]
\end{align*}
holds for any set of $K$ items $S$ and items $(j, k) \in \Pi_2(S)$ in it. Our claim follows from noting that $\prob{\hat{\sigma}(j) > \hat{\sigma}(k)} = \prob{x_j\T \hat{\theta} \leq x_k\T \hat{\theta}}$ and then applying the above bound.
\end{proof}

\cref{prop:ranking loss} says that the ranking loss decreases exponentially with sample size $T$, $\kappa^2$, and squared gaps; and increases with the number of features $d$. This is the same as in Theorem 2 of \citet{azizi22fixedbudget} for fixed-budget best-arm identification in GLMs. Therefore, although we do not prove a lower bound, our bound is likely near-optimal.

\section{Frank-Wolfe Method for Optimal Design}
\label{sec:frank-wolfe method}

Historically, the Frank-Wolfe (FW) method has been utilized as a scalable algorithm for solving the traditional D-optimal design problem \cite{khachiyan1996rounding,zhao2023analysis}.
When instantiating the FW method for our problem \eqref{eq:optimal design} we get \cref{algo:fw}.
FW consists of primarily two high-level steps. First, it computes the gradient $G_t$ at the current iterate (\cref{algo-step:fw_grad} of \cref{algo:fw}) and then it finds the distribution $\widehat{\pi}_t$ which maximizes the linear functional defined by $G_t$, i.e.~$\max_{\pi \in \Delta^\mathcal{S}} \langle G_t, \pi \rangle$ (\cref{algo-step:fw_lmo} of \cref{algo:fw}) using a \emph{Linear Maximization Oracle (LMO)}. 
Finally, it updates the iterate with a convex combination of the current iterate $\pi_t$ and $\widehat{\pi}_t$ with a stepsize $\alpha_t$ chosen so that it maximizes $\max_\alpha g((1-\alpha) \cdot \pi_t + \alpha \cdot \widehat{\pi}_{t})$ (\cref{algo-step:fw_line_search} of \cref{algo:fw}) using a \emph{linear search} algorithm. 
It is known that this method converges to the maximizer of \eqref{eq:optimal design} \cite{zhao2023analysis}.

\begin{algorithm}[htbp]
\caption{Frank-Wolfe Method on $\Delta^\mathcal{S}$ simplex}
\label{algo:fw}
\KwIn{Objective $g$, \#steps $T_{\mathrm{od}}$, initial iterate $\pi_0 \in \Delta^\mathcal{S}$}
\For{$t = 0, 1, \ldots, T_{\mathrm{od}}$}{
    Compute gradient: $G_t = \nabla_\pi g(\pi_t)$ \label{algo-step:fw_grad} 

    Linear Max.~Oracle (LMO): $\widehat{\pi}_{t} \in \argmax\limits_{\pi \in \Delta^\mathcal{S}} \langle G_t, \pi \rangle$ \label{algo-step:fw_lmo} 
    
    Line search: $\alpha_t \in \argmax\limits_{\alpha \in [0, 1]} g((1-\alpha) \cdot \pi_t + \alpha \cdot \widehat{\pi}_{t})$ \label{algo-step:fw_line_search}
    
    Iterate update: $\pi_{t+1} = (1 - \alpha_t) \cdot \pi_t + \alpha_t \cdot \widehat{\pi}_{t}$ \label{algo-step:fw_iterate}
}
\Return $\pi_{T_{\mathrm{od}}}$

\end{algorithm}

Typically, FW method is used when there is a conceptually simple LMO. In fact, for our problem \eqref{eq:optimal design} on the simplex $\Delta^\cS$, we can simplify the LMO as
\begin{align}
\widehat{\pi}_t &= e_{S_t}\,, \text{ where } \; S_t = \argmax_{S \in \cS} G_t(S)\,, \text{ and } \nonumber \\
e_{S_t}(S) &= \mathbf{1}\{S = S_t\}\,, \forall S \in \cS.
\label{eq:fw_lmo}
\end{align}
That is $\widehat{\pi}_t$ has all its probability mass on a set $S_t$ with the largest corresponding partial derivative $\max_{S} G_t(S)$. This also implies that the line search step with $e_{S_t}$ can only increase the number of non-zero elements in the iterate by at most 1. Thus after $t$ steps, the sparsity can only decrease by $t/{N \choose K}$. This is desirable as we prefer that the optimal design chooses only a small number of subsets for eliciting ranking feedback. This is one of the reasons why FW method is preferred over the standard gradient descent.

While LMO step appears simple, it requires computing the gradient w.r.t.~an ${N \choose K}$ dimensional vector and finding its maximimum entry. When $N \gg K$ this has an exponential computational complexity $\Omega(N^K)$ in $K$. The line search is also computationally expensive because it requires calculating the objective at many feasible iterates and the objective calculation involves computing the $\log\det$ of a $d \times d$ matrix which requires $O(d^3)$ operations in most practical implementations. In \cref{sec:scalable algorithm} we  address these and other limitations through a new algorithm, \ouralgo with a better per-iteration complexity in terms of both $K$ and $d$.

\section{\ouralgo: Randomized Frank-Wolfe with Low-Rank and Sparse Updates}
\label{sec:scalable algorithm}

\begin{algorithm}[h]
\caption{\ouralgo for solving D-optimal design for selecting subsets for eliciting $K$-way ranking feedback}
\label{algo:ouralgo}
\KwIn{\#items $N$, subset size $K$, features $\{x_i\}_{i=1}^N$, sampling size $R$, \#steps $T_{\mathrm{od}}$, initial iterate $\pi_0 \in \Delta^\mathcal{S}$, $\log\det$ argument matrix $V_{\pi_0}$, 
regularization $\gamma$, and line-search tolerance $\alpha_\mathrm{tol}$}
Let $\mathcal{S}$ be the collection of all $K$-sized subsets of $[N]$

Set $z_{j,k} = x_j - x_k$, $\forall (j,k) \in \Pi_2([N])$

Set $V_0^{\mathrm{inv}} = (V_{\pi_0} + \gamma \mathbf{I}_{d \times d})^{-1}$ \label{algo-step:v_inv}%

\For{$t = 0, 1, \ldots, T$}{
    \tcp{Randomized LMO}
    Sample $\mathcal{R}_t \sim \mathrm{Uniform}(\{\mathcal{R} \subseteq \mathcal{S} \mid |\mathcal{R}| = R \rceil\})$ \label{algo-step:sample} 

    $\{G_t(S)\} = \text{\partialgrad}(N, {\mathcal{R}}_t, V^{\mathrm{inv}}_t, \{z_{j,k}\})$ \label{algo-step:partial_grad} 
    
    $S_t \in \argmax_{S \in \mathcal{R}_t} G_t(S)$ \label{algo-step:lmo} 

    \tcp{Line search (\cref{algo:ouralgo_subroutines})}
    $\alpha_t = \text{\goldensearch}(V_t^{\mathrm{inv}}, A, \alpha_{\mathrm{tol}})$  \label{algo-step:golden-search}
    
    \tcp{Update iterate (\cref{algo:ouralgo_subroutines})}
    $\pi_{t+1} = (1 - \alpha_t) \cdot \pi_t + \alpha_t \cdot e_{{S}_{t}}$ \label{algo-step:update-iterate}

    $V_{t+1}^{\mathrm{inv}} = \text{\updateinverse}(V_{t}^{\mathrm{inv}}, A_S, \alpha_t)$ \label{algo-step:update-inv}
    
}
\Return $\pi_T$

\SetKwFunction{FMain}{\partialgradnox}
\SetKwProg{Fn}{Sub-routine}{:}{}
\Fn{\FMain{$N, \mathcal{R}, V^{\mathrm{inv}}, \{z_{j,k}\}$}\label{algo-step:partial_grad_subroutine}}{
    \For{$(j,k) \in \Pi_2([N])$}{
        \tcp{Pair gradients}
        $D_{j,k} \gets z_{j,k}^\top V^{\mathrm{inv}} z_{j,k}$ \label{algo-step:pair_grad}
    }

    \For{$S \in \mathcal{R}_t$}{
        $G(S) = \sum_{(j,k) \in \Pi_2(S)} D_{j,k}$ \label{algo-step:subset_grad}
    }
    \Return $\{G(S)\}_{S \in \mathcal{R}_t}$
}
\end{algorithm}

Here we present and analyze our proposed algorithm \ouralgo (\cref{algo:ouralgo}) for solving the D-optimal design problem \eqref{eq:optimal design}. Note that pseduocodes of some of the sub-routines used this algorithm are given \cref{algo:ouralgo} in the Appendix. \ouralgo is a fast randomized variant of the FW method which incorporates computationally efficient low-rank and sparse operations. Next, we describe various components of the algorithm and explain how they address the scaling concerns identified in \cref{sec:frank-wolfe method}.

\subsection{Randomized LMO and Cached Derivatives}

In \cref{sec:frank-wolfe method}, we saw that computation of the LMO has ${N \choose K}$ complexity due to the maximization step over $\cS$ \eqref{eq:fw_lmo}. Utilizing a randomized variant of FW method \cite{kerdreux2018frank}, we reduce this complexity to $R$ by restricting the maximization to an $R$ sized sub-collection $\cR_t$ chosen uniformly at random from $\cS$ (\cref{algo-step:sample,algo-step:lmo} of \cref{algo:ouralgo}):
\begin{align}
S_t \in &\argmax_{S \in \cR_t} G_t(S)\,,  \text{ where } \nonumber \\
\mathcal{R}_t &\sim \mathrm{Uniform}(\{\mathcal{R} \subseteq \mathcal{S} \mid |\mathcal{R}| = R \rceil\})\,.
\end{align}
This randomization of LMO also implies that we only need to compute $R$ partial gradients $G_t(S) = \frac{\partial g(\pi_t)}{\pi_t(S)}$ for the subsets $S \in \cR_t$. Next we see how to compute them efficiently.
Recall that $g(\pi) = \log\det(V_\pi)$ with $V_\pi = \sum_{S \in \cS} \pi(S) A_S A_S^\top$. Then using the fact that $\log\det(U) = U^{-1}$ for PSD matrix $U$ the partial derivative at any $S$ can be written as
\begin{align}
  \frac{\partial g(\pi)}{\partial (\pi(S))}
  = \langle A_S A_S^\top, V_{\pi}^{-1} \rangle 
  &= \hspace{-0.1in} \sum_{(j,k) \in \Pi_2(S)} \hspace{-0.1in}
  z_{j,k}^\top V_{\pi}^{-1} z_{j,k}
  \nonumber \\
  &\eqdef \sum_{(j,k) \in \Pi_2(S)}  D_{j,k}.
  \label{eq:gradient}
\end{align}
Note that all the partial derivatives are a sum of ${K \choose 2}$ terms chosen from the ${N \choose 2}$ terms in $\{D_{j,k}\}_{(j,k) \in \Pi_2([N])}$. Therefore, \partialgrad sub-routine first computes and caches $\{D_{j,k}\}_{(j,k)}$ (\cref{algo-step:pair_grad} in \cref{algo:ouralgo}) and then it combines them to produce the partial derivative corresponding to each $S \in \mathcal{R}_t$ (\cref{algo-step:subset_grad} in \cref{algo:ouralgo}). Assuming that $V_\pi^{-1}$ is known, this reduces our overall LMO complexity from $O({N \choose K} {K \choose 2} d^2)$ to $O({N \choose 2} d^2 + R {K \choose 2})$. Please note that these operations are further parallelized in our code.

\subsection{Line Search with Low-Rank and Sparse Updates}
\label{sec:line_search}

As noted earlier in \cref{sec:frank-wolfe method}, line search: $\max_\alpha g((1-\alpha) \cdot \pi_t + \alpha \cdot e_{S_t})$ is another expensive operation. \ouralgo solves it using the \goldensearch sub-routine (\cref{algo-step:golden-search} of \cref{algo:ouralgo}). 
\goldensearch (see \cref{algo:ouralgo_subroutines} in Appendix) solves it as 1-D unimodal maximization problem using the golden-section search method \cite{kiefer1953sequential} which reduces the search space by a factor of $\varphi$---the golden ratio---at each iteration. It computes the maximizer up to tolerance of $\alpha_{\mathrm{tol}}$ after $1 + \log_{\varphi}(\alpha_{\mathrm{tol}})$ objective computations. Earlier we also noted that naively computing the objective value has a complexity of $O(d^3)$. However, notice that for an $\alpha$ we can write the update to $V_\pi$ as
\begin{align}
(1-\alpha) V_\pi + \alpha A_S A_S^\top. \label{eq:v_pi_update}
\end{align}
Let $r = {K \choose 2} \ll d$. Then we see that the second term above is a low $r$ rank matrix since $A_S \in \mathbb{R}^{d \times r}$. Assuming access to $V_\pi^{-1}$, this allows us to derive the objective value as the $\log\det$ of an $r \times r$ matrix. The derivation is given in \cref{app:algo_additional_details}. This reduces the over overall line search complexity to $O((r^3 + rd^2) \log_{\varphi}(\alpha_{\mathrm{tol}}))$ from $O(d^3 \alpha_{\mathrm{tol}}^{-1})$.

Both the partial derivative and the objective value computations require the inverse of $V_{\pi_t}$. \ouralgo initialized it in the variable $V_0^{\mathrm{inv}}$ (\cref{algo-step:v_inv} of \cref{algo:ouralgo}) and updates it using \updateinverse (\cref{algo-step:update-inv} of \cref{algo:ouralgo}). Naively computing the inverse of a $d \times d$ matrix has $O(d^3)$ complexity. Again by noting that $V_\pi$ is updated with a low-rank matrix \eqref{eq:v_pi_update}, we can derive formula for $V_{t+1}^{\mathrm{inv}}$ using only an inverse of an $r \times r$ matrix (see \cref{app:algo_additional_details} for details). By implementing this formula \updateinverse (\cref{algo:ouralgo_subroutines}) reduces the inverse computation complexity from $O(d^3)$ to $O(r^3 + rd^2)$.
Finally, we address how \ouralgo maintains and stores the iterate $\pi_t$ with ${N \choose K}$ entries. When $N \gg K \gg 1$, this is expensive in terms of both compute and memory. Since FW updates are sparse (\cref{algo-step:update-iterate} of \cref{algo:ouralgo}), \ouralgo stores $\pi_t$ as a sparse matrix and updates it using sparse operations. More details are given in \cref{app:algo_additional_details}. This reduces the complexity of maintaining $\pi_t$ to its number of non-zero elements, which is at most $O(t)$ at iteration $t$ if \ouralgo is initialized with $\pi_0$ with a constant number of non-zero entries.

\subsection{Convergence Rate of Randomized FW Method}
\label{sec:convergence_rate}

In this section, we analyze the convergence rate of \ouralgo in terms of number of its iterations $T_{\mathrm{od}}$. From the previous sections, we see that \ouralgo is a randomized variant of the FW method. It is known that if $g$ were Lipschitz smooth, then \ouralgo would have converged to the maximizer of \eqref{eq:optimal design} \cite{kerdreux2018frank}. However, $g$ does not satisfy this condition due to the composition with $\log$. Instead we prove a more general result that randomized FW method (\cref{algo:rand_fw} in Appendix; generalization of \cref{algo:ouralgo}) solves problems of the form
\begin{align}
\min_{y \in \cY}  F(y) = \min_{y \in \cY} f(\mathcal{A}(y)), \label{eq:lhscb_problem}
\end{align}
where $\cY$ is the convex hull of a set $\cV$ of $\widetilde{N}$ vectors, $\cA: \cY \to \cK$ is a linear operator, $\cK$ is a regular cone, $f: \mathrm{int}(\cK) \to \mathbb{R}$ is a logarithmically-homogeneous self-concordant barrier (LHSCB) \cite{zhao2023analysis}.
\begin{theorem}
If the initial iterate $y_0$ maps to domain of $f$, i.e.~$\cA(y_0) \in \mathrm{int}(\cK)$, then Randomized FW method (\cref{algo:rand_fw}) which samples $R$ elements from $\cV$ for computing the randomized LMO achieves an $\varepsilon$ sub-optimal solution to the problem \eqref{eq:lhscb_problem} after $T_\mathrm{FW}=O(\max(1,\widetilde{N}/R)\varepsilon^{-1})$ iterations, i.e.~$F(y_{T_\mathrm{FW}}) - \min_{y \in \cY} F(y) \leq \varepsilon$.
\label{thm:rand_fw}
\end{theorem}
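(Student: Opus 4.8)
The plan is to merge the self-concordant analysis of the full Frank-Wolfe method on LHSCB objectives (as in \cite{zhao2023analysis}) with a probabilistic accounting of the subsampled oracle (in the spirit of \cite{kerdreux2018frank}). Write $h_t = F(y_t) - \min_{y \in \cY} F(y)$ for the suboptimality, let $d_t = \hat{y}_t - y_t$ be the direction the \emph{exact} LMO over all of $\cV$ would return, and let $g_t = -\langle \nabla F(y_t), d_t \rangle = \max_{v \in \cV} \langle \nabla F(y_t), y_t - v \rangle$ be the true FW gap; convexity gives $g_t \ge h_t$. Let $\delta_t = \norm{d_t}_{y_t} = \sqrt{d_t\T \nabla^2 F(y_t) d_t}$ denote the local (Dikin) norm of the direction. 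Since $\cA$ is linear and $f$ is self-concordant, $F = f \circ \cA$ is self-concordant, and the condition $\cA(y_0) \in \mathrm{int}(\cK)$ together with the barrier property keeps every iterate feasible under line search.

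First I would establish the per-step decrease of a full FW step. Self-concordance yields, for any $\alpha$ with $\alpha \delta_t < 1$, the barrier bound $F(y_t + \alpha d_t) \le F(y_t) - \alpha g_t + \omega_*(\alpha \delta_t)$ with $\omega_*(s) = -s - \log(1-s)$; minimizing the right-hand side over $\alpha$ (which \goldensearch approximates) gives a guaranteed decrease of exactly $\omega(g_t/\delta_t)$, where $\omega(s) = s - \log(1+s)$. The essential use of the LHSCB structure enters here: logarithmic homogeneity, via $\norm{y_t}_{y_t}^2 = \nu$ and the triangle inequality $\delta_t \le \norm{\hat y_t}_{y_t} + \norm{y_t}_{y_t}$, lets me bound $\delta_t \le D$ for a constant $D$ depending only on $\nu$ and the geometry of $\cV$. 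Combining $\delta_t \le D$, $g_t \ge h_t$, and the standard lower bound $\omega(s) \ge \tfrac{1}{2} s^2/(1+s)$, a full FW step decreases $F$ by at least $c\, h_t^2$ for a constant $c = c(D)$ in the regime driving the $O(1/\varepsilon)$ rate.

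Next I would insert the randomized LMO. When $\cR_t$ of size $R$ is drawn uniformly from $\cV$, the true FW vertex $\hat y_t$ lies in $\cR_t$ with probability exactly $\min(1, R/\widetilde N)$; on that event the sampled gap equals $g_t$ and the line search realizes the full decrease above, while on the complement the line search (permitting $\alpha_t = 0$) never increases $F$, so the decrease is nonnegative. Conditioning on the history $\cF_t$ and taking expectation over $\cR_t$ gives $\E{h_{t+1} \mid \cF_t} \le h_t - \min(1, R/\widetilde N)\, c\, h_t^2$. Taking full expectations and using Jensen's inequality $\E{h_t^2} \ge (\E{h_t})^2$, the sequence $a_t = \E{h_t}$ satisfies $a_{t+1} \le a_t - c\min(1, R/\widetilde N)\, a_t^2$. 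Dividing by $a_t a_{t+1}$ and using $a_{t+1} \le a_t$ yields $1/a_{T} \ge c\min(1, R/\widetilde N)\, T$, hence $\E{h_T} \le \max(1, \widetilde N/R)/(c\,T)$; setting this to $\varepsilon$ gives $T_\mathrm{FW} = O(\max(1, \widetilde N/R)\varepsilon^{-1})$, which is the claim (in expectation).

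I expect the main obstacle to be the single-step analysis without global Lipschitz smoothness. In the smooth setting of \cite{kerdreux2018frank} the decrease is the clean quadratic $g_t^2/(2C)$ with a global curvature constant $C$; here it is the self-concordant quantity $\omega(g_t/\delta_t)$, so I must both (i) bound the local norm $\delta_t$ uniformly over all iterations via logarithmic homogeneity, and (ii) argue that the early, large-gap iterations — where $\omega$ is not quadratic and the optimal step may clamp at $\alpha_t = 1$ — only help and can be absorbed into the constants without affecting the asymptotic rate. Reconciling this self-concordant per-step bound with the factor-$R/\widetilde N$ loss from subsampling, uniformly across iterations, is the crux that neither \cite{zhao2023analysis} nor \cite{kerdreux2018frank} covers individually.
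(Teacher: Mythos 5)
Your proposal is correct and follows essentially the same route as the paper: bound the per-step progress via the self-concordant upper bound $-\alpha G_t + \omega(\alpha D_t)$, condition on the event (probability $R/\widetilde{N}$) that the exact LMO vertex lands in $\cR_t$, use the $\alpha_t=0$ option to guarantee no increase otherwise, and run the resulting expected recursion $\E{h_{t+1}} \le \E{h_t} - c\min(1,R/\widetilde{N})\E{h_t}^2$. The one imprecision is your claim of a uniform bound $\delta_t \le D$ from logarithmic homogeneity alone: the correct LHSCB bound is gap-dependent, $D_t \le G_t + \theta + \sqrt{\theta}$ (since $\|w\|_{\nabla^2 f(u)} \le -\langle\nabla f(u),w\rangle$ for $w\in\cK$ and $\|u\|_{\nabla^2 f(u)}^2=\theta$), which is why the paper inherits Zhao--Freund's two-phase complexity (a logarithmic burn-in term plus the $O(\theta^2/\varepsilon)$ term) rather than a single quadratic-decrease phase --- but you flag exactly this as the crux, and it only affects constants absorbed into the $O(\cdot)$ of the stated theorem.
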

This novel result extends the convergence of randomized FW for Lipschitz smooth objectives to problems of the form \eqref{eq:lhscb_problem}. Next, as a corollary of \cref{thm:rand_fw}, we provide a convergence rate for \ouralgo.
\begin{corollary}
For full-rank $V_{\pi_0}$, $\gamma=0$, and small enough $\alpha_\mathrm{tol}$, \ouralgo (\cref{algo:ouralgo}) outputs an $\varepsilon$ sub-optimal solution to the D-optimal design problem \eqref{eq:optimal design} after $T_\mathrm{od}=O(\max(1, {N \choose K}/R)\varepsilon^{-1})$ iterations, i.e.~$g(\pi_{T_\mathrm{od}}) - \min_{\pi \in \Delta^{\mathcal{S}}} g(\pi) \leq \varepsilon$.
\label{cor:ouralgo}
\end{corollary}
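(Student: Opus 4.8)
The plan is to derive \cref{cor:ouralgo} by casting the D-optimal design \eqref{eq:optimal design} into the abstract form \eqref{eq:lhscb_problem} and invoking \cref{thm:rand_fw}. Concretely, maximizing $g(\pi) = \log\det(V_\pi)$ over $\pi \in \Delta^{\cS}$ is the same as minimizing $F(\pi) = f(\cA(\pi))$ under the following dictionary: the decision variable is $y = \pi$; the domain $\cY = \Delta^{\cS}$ is the convex hull of the vertex set $\cV = \set{e_S : S \in \cS}$, so that $\widetilde{N} = \abs{\cS} = {N \choose K}$; the cone $\cK = \mathbb{S}^d_+$ is the PSD cone; the linear operator is $\cA(\pi) = V_\pi = \sum_{S \in \cS} \pi(S) A_S A_S\T$; and the barrier is $f = -\log\det$. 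Under this identification the randomized LMO of \ouralgo (\cref{algo-step:sample,algo-step:lmo}), which samples $R$ subsets and maximizes the partial gradient $G_t(S) = \langle A_S A_S\T, V_{\pi_t}^{-1}\rangle$ from \eqref{eq:gradient}, is exactly the randomized LMO over $\cV$ of \cref{algo:rand_fw}, up to the sign flip between maximizing $g$ and minimizing $F$.

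Second, I would verify the hypotheses of \cref{thm:rand_fw}. The function $f = -\log\det$ is a standard logarithmically-homogeneous self-concordant barrier for $\mathbb{S}^d_+$ with parameter $\nu = d$, and $\mathbb{S}^d_+$ is a regular cone (closed, convex, pointed, and full-dimensional); both facts are classical \cite{zhao2023analysis}. The initial-iterate condition $\cA(y_0) \in \mathrm{int}(\cK)$ becomes $V_{\pi_0} \succ 0$, which is precisely the \emph{full-rank $V_{\pi_0}$} hypothesis; together with $\gamma = 0$ this also guarantees that $V_0^{\mathrm{inv}} = V_{\pi_0}^{-1}$ in \cref{algo-step:v_inv} is well defined, so the cached-derivative and low-rank machinery reproduces the abstract iterates exactly in exact arithmetic. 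It then remains only to confirm that \ouralgo is the specialization of \cref{algo:rand_fw} to this instance, which is immediate: the \updateinverse routine and the low-rank $\log\det$ evaluation are algebraically equivalent reformulations of the same rank-$r$ update \eqref{eq:v_pi_update} and do not alter $\pi_t$.

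The one genuinely non-mechanical point, and where I expect the main obstacle, is the \emph{line search}. \cref{thm:rand_fw} is naturally proved with an exact maximizing step size, whereas \ouralgo computes $\alpha_t$ only up to tolerance $\alpha_\mathrm{tol}$ via \goldensearch. The map $\alpha \mapsto g((1-\alpha) V_{\pi_t} + \alpha A_{S_t} A_{S_t}\T)$ is concave, hence unimodal, on the segment where its argument stays PD, so golden-section search does return an approximate maximizer; the task is to show that for small enough $\alpha_\mathrm{tol}$ this approximation preserves the per-iteration decrease used in the proof of \cref{thm:rand_fw}. I would argue that the sufficient-decrease inequality driving the $O(\varepsilon^{-1})$ rate is stable under an $O(\alpha_\mathrm{tol})$ perturbation of the step size, so that choosing $\alpha_\mathrm{tol}$ small relative to $\varepsilon$ leaves the rate unchanged up to the absorbed constant. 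With this in hand, \cref{thm:rand_fw} applied with $\widetilde{N} = {N \choose K}$ yields the claimed bound $T_\mathrm{od} = O(\max(1, {N \choose K}/R)\varepsilon^{-1})$.
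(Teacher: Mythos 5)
Your proposal is correct and follows essentially the same route as the paper: cast \eqref{eq:optimal design} as an instance of \eqref{eq:lhscb_problem} with $\cV$ the vertices of $\Delta^{\cS}$, $\cA(\pi)=V_\pi$, $\cK=\mathbb{S}^d_+$, and $f=-\log\det$ (an LHSCB with parameter $\theta=d$), check that full-rank $V_{\pi_0}$ gives $\cA(y_0)\in\mathrm{int}(\cK)$, and invoke \cref{thm:rand_fw} with $\widetilde{N}={N\choose K}$. The only difference is that you flag the inexact line search as a point needing a stability argument, whereas the paper simply treats the $\alpha_\mathrm{tol}\to 0$ limit as giving exact equivalence to the abstract method (noting that golden-section search reaches machine precision in a few dozen steps); your extra caution is reasonable but does not change the argument.
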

We provide the proof for these statements in \cref{app:analysis}.
Note that \cref{cor:ouralgo} needs $V_{\pi_0}$  to be full-rank, otherwise the objective might not be well-defined. In practice this is prevented by regularizing the matrix as $V_{\pi_0} + \gamma \mathbf{I}_{d \times d}$ with a small $\gamma > 0$.
Finally, aggregating the complexities of the steps of \ouralgo, we see its per iteration complexity is $O({N \choose 2} d^2 + R {K \choose 2} + (r^3 + rd^2) \log_{\varphi}(\alpha_{\mathrm{tol}}) + T_{od})$, eliminating any $({N \choose K})$ and $O(d^3)$ dependence.

Note that the FW method (\cref{algo:fw}) implemented with all the tricks from \ouralgo except the randomization of LMO would have an iteration complexity of $O(\varepsilon^{-1})$ \cite{zhao2023analysis} and a per-iteration complexity of $O({N \choose 2} d^2 + {N \choose K} {K \choose 2} + (r^3 + rd^2) \log_{\varphi}(\alpha_{\mathrm{tol}}) + T_{od})$. Thus we see that the randomization of LMO leads to better per-iteration complexity while trading off the iteration complexity. This is because the proof of \cref{cor:ouralgo} pessimistically assumes that the objective function does not decrease at an iteration if the true LMO output is not preset in the sub-collection $\cR_t$. However, we do not observe this convergence rate degradation in practice (see \cref{sec:experiments}). Hence \ouralgo is a practical algorithm for solving $\eqref{eq:optimal design}$.

\section{Experiments}
\label{sec:experiments}

We conduct two experiments on three real-world datasets. The first experiment is with simulated feedback, for different sample sizes $T$ and $K$ in the $K$-way feedback. We call this \emph{synthetic feedback} setup. The second is with real feedback, where preference feedback present in the dataset is used. We refer to it as \textit{real feedback} setup. 

We experiment with (a) BEIR-COVID\footnote{https://huggingface.co/datasets/BeIR/trec-covid}, (b) Trec Deep Learning (TREC-DL)\footnote{https://microsoft.github.io/msmarco/TREC-Deep-Learning}, and (c) NECTAR\footnote{https://huggingface.co/datasets/berkeley-nest/Nectar} datasets. These are question-answering datasets, where the task is to rank answers or passages by relevance to the question. Each question in BEIR-COVID and TREC-DL has a hundred potential answers. Each question in NECTAR has seven potential answers. All the following experiments are conducted on 3.5 GHz 3rd generation Intel Xeon Scalable processors with 128 vCPUs and 1TB RAM.

\subsection{Synthetic Feedback Setup}
\label{ssec:synthetic}

\begin{figure*}[t]
    \centering
    \subfigure[BEIR-COVID, $K=2$]{
        \includegraphics[width=0.31\textwidth]{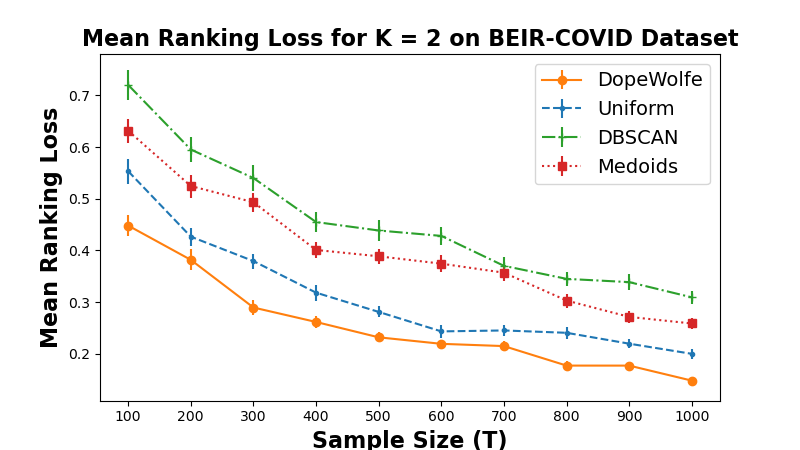}
        \label{fig:subfig2}
    }
    \subfigure[TREC-DL, $K=2$]{
        \includegraphics[width=0.31\textwidth]{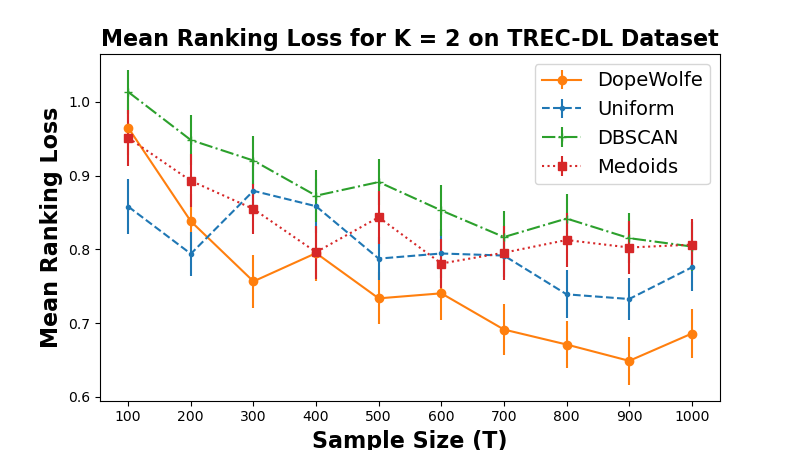}
        \label{fig:subfig1}
    }
    \subfigure[NECTAR, $K=2$]{
        \includegraphics[width=0.31\textwidth]{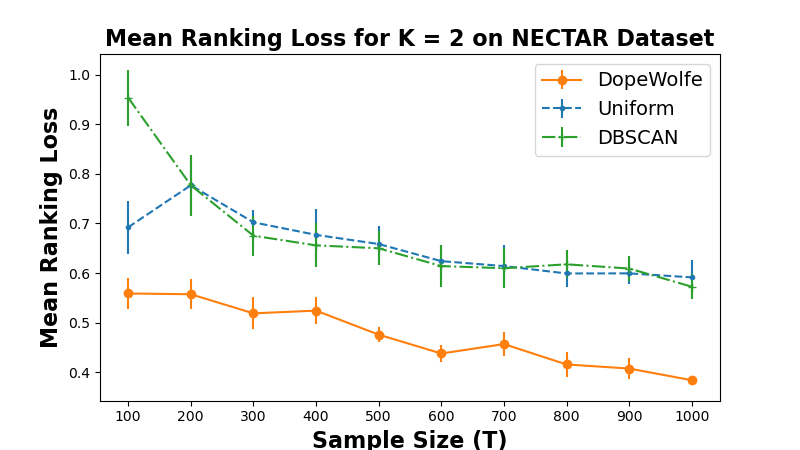}
        \label{fig:subfig3}
    }
    \subfigure[BEIR-COVID, $K=3$]{
        \includegraphics[width=0.31\textwidth]{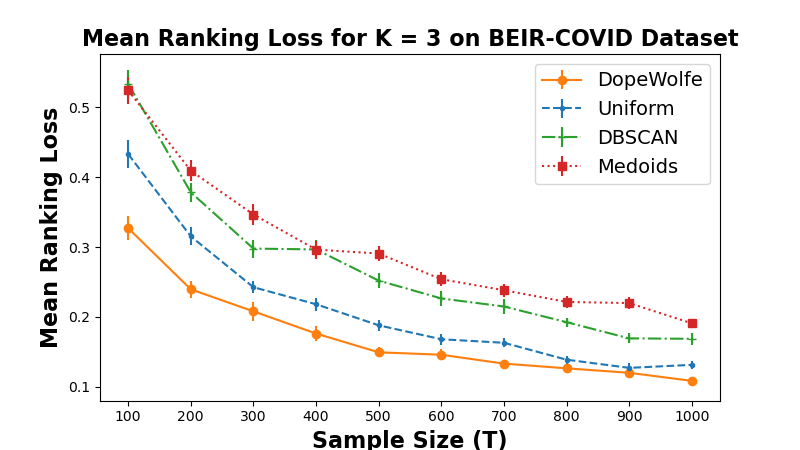}
        \label{fig:subfig4}
    }
    \subfigure[TREC-DL, $K=3$]{
        \includegraphics[width=0.31\textwidth]{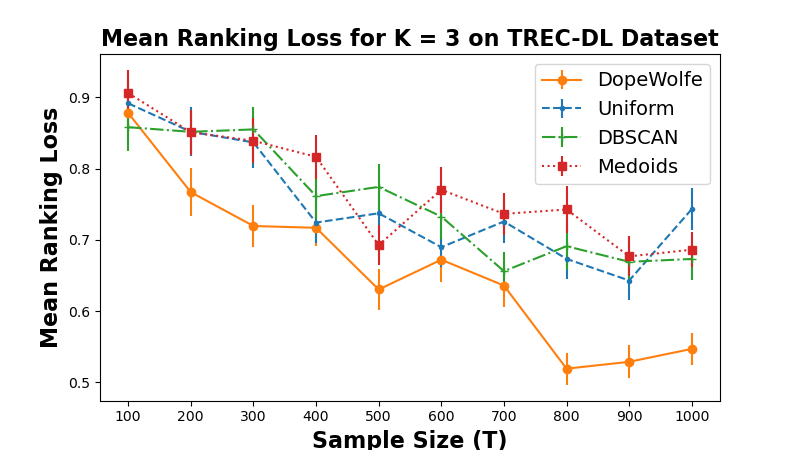}
        \label{fig:subfig5}
    }
    \subfigure[NECTAR, $K=3$]{
        \includegraphics[width=0.31\textwidth]{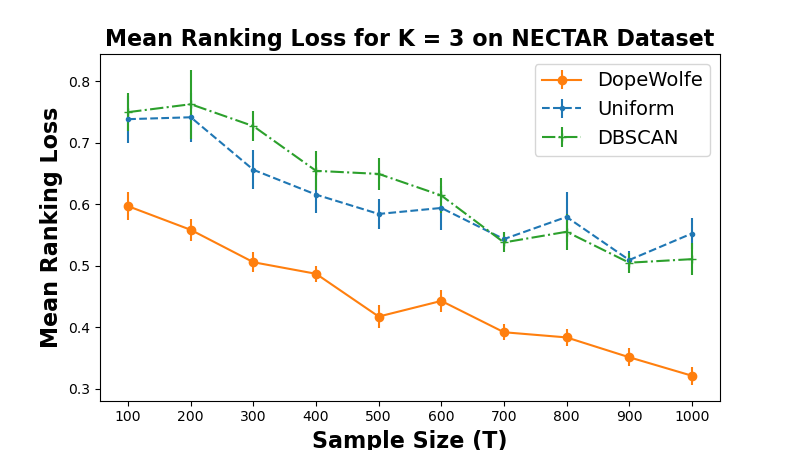}
        \label{fig:subfig6}
    }
    \caption{Mean KDtau metric on datasets with synthetic feedback. $k$-medoids fails for the NECTAR dataset due to excessive memory requirements (2 TB and 8 TB of RAM, respectively).}
    \label{fig:synthkdtau}
\end{figure*}

In this section, we provide empirical evidence demonstrating that \ouralgo, when applied to sampling $K$-sized subsets from a list, enables the learning of an improved ranking model across various values of $K$ and sample sizes $T$. We first compute $384$-dimensional dense BERT embeddings~\cite{reimers-2019-sentence-bert} for each question and answer, and then reduce the embedding dimensions to 10 by fitting UMAP~\cite{mcinnes2018umap} on the answers in a dataset. The same UMAP transformation is applied to questions to get 10-dimensional embedding of the questions. Let $q$ and $a$ be the projected embeddings of a question and an answer to it. Then, we consider the outer product $vec(qa^T)$ as the feature vector of the question-answer pair. Its length is $d = 100$. We choose a random $\theta^*\in \realset^{100}$ to generate feedback in \eqref{eq:ranking feedback}.

The output of \ouralgo is a  distribution over all $K$-subsets of the original set of $N$ items. To show the efficacy of \ouralgo, we compare it with the following baselines:

\begin{enumerate}[leftmargin=0.5cm]
    \item \textbf{Uniform:} This approach chooses $K$-subsets at random with equal probability.
    \item \textbf{DBSCAN:} This approach first applies the DBSCAN clustering algorithm~\cite{ester1996density} over features of all $K$-subsets. Then, it chooses any $K$-subset associated with a \textit{core sample} (centroid) at random with equal probability. We vary the  distance parameter in DBSCAN in $[10^{-5}, 1]$ and choose the one that yields the sparsest distribution. 
    \item \textbf{$m$-medoids:} We set $m$ to the size of the support of the distribution from \ouralgo. Then, we run the $m$-medoids algorithm\footnote{https://tinyurl.com/scikit-learn-m-medoids} over the features of all $K$-subsets. Finally, we select $K$-subsets associated with \textit{core samples} (centroids) at random with equal probabilities. Note that this approach has extra information in terms of $m$ which other methods do not have.  
\end{enumerate}

We consider 1 random question in BEIR-COVID and TREC-DL datasets. The number of 2-way subsets and 3-way subsets for one question with hundred answers are $100 \choose 2$ $=4950$ and $100 \choose 3$ $=161700$, respectively. We consider random 30,000 questions in NECTAR dataset. The number of 2-way subsets and 3-way subsets, where each question has seven answers, are $30000$$ 7 \choose 2$ $=630000$ and $30000$$ 7 \choose 3$ $=1050000$, respectively. Given a sample size $T$, we fit a PL ranking model. We repeat this experiment for 100 random runs and show the mean along with standard error of the ranking loss~\eqref{eq:ranking loss} for $K=2,3$ and $T=\{100,200,\ldots 1000\}$ in Figure~\ref{fig:synthkdtau}. NDCG is shown in Figure~\ref{fig:synthndcg} (Appendix~\ref{app:synthexperiments}). In addition, see Figure~\ref{fig:synthk4} (Appendix~\ref{app:synthexperiments}) for results on $K=4$. 

We observe that: (a) our approach achieves the best ranking performance for all the datasets, (b) DBSCAN is either equal or worse than Uniform (c) there is no clear winner between DBSCAN and $m$-medoids across different setups, (d) the ranking loss decreases as we increase $K$ for the same $T$, which is expected since larger $K$ allows for larger feedback, and (e) the ranking loss decreases as we increase $T$. $m$-medoids fails for the NECTAR dataset due to excessive memory requirements (2 TB and 8 TB of RAM, respectively). This shows that \ouralgo is indeed able to choose a few subsets from a large number of candidate subsets to learn a better ranking model than baselines. 

\subsection{Real Feedback Setup}
\label{ssec:real}

In this section, we consider the real feedback available in the TREC-DL and BEIR-COVID dataset. Particularly, these datasets provide the full ranking of all the top-100 answers for any question, with BEIR-COVID containing ties as well. We consider the same set up as Section \ref{ssec:synthetic} with a few changes. Since the feedback is real, we use more representative 1024 dimensional BGE-M3 embedding~\cite{bge-m3} to ensure that the features are rich and reasonable enough to capture the true ranking (realizable ranking problem). 
Additionally, since the underlying ranking is fixed and feedback is noiseless, we do not vary the sample size $T$. Instead, we fix $T$ and vary $K$ in $K$-way feedback. 

Note that large $N$ and moderate $K$ values make this an extremely large scale ranking problem. Particularly, when $N = 100$ and $K=10$, there are $\sim$$17$ Trillion possible subsets for which we can elicit feedback. Therefore, it is practically impossible to run the optimal design approach from~\cite{mukherjee24optimal} or vanilla Frank-Wolfe method even in our moderately compute heavy machine. We run \ouralgo, where we uniformly sub-sample a collection of $\min({100 \choose K}, 10^5)$ subsets and compute the approximate LMO at each iteration as discussed in Section~\ref{sec:scalable algorithm}. Please see Appendix~\ref{app:realexperiments} for more experimental details and choices. 

Similar to Section~\ref{ssec:synthetic},  we evaluate the feedback elicitation methods by assessing the downstream performance of the learned PL model. Specifically, we compare \ouralgo to Uniform sampling for feedback elicitation. DBSCAN and $m$-medoids are excluded from this comparison due to their inability to scale effectively for this problem size. In Figure~\ref{fig:real_feedback_exp}, we plot the ranking loss~\eqref{eq:ranking loss} for the real feedback experiments. See Figure~\ref{fig:real_feedback_ndcg} (Appendix~\ref{app:realexperiments}) for NDCG metric. We conclude that (a) \ouralgo always perform better than Uniform sampling, and (b) the performance of both the algorithms increase as $K$ increases. This is expected since larger $K$ allows for more information in a $K$-way feedback.

\begin{figure}[t]
    \centering
    \subfigure[Ranking loss on BEIR-COVID with real feedback]{
        \includegraphics[width=0.4\textwidth]{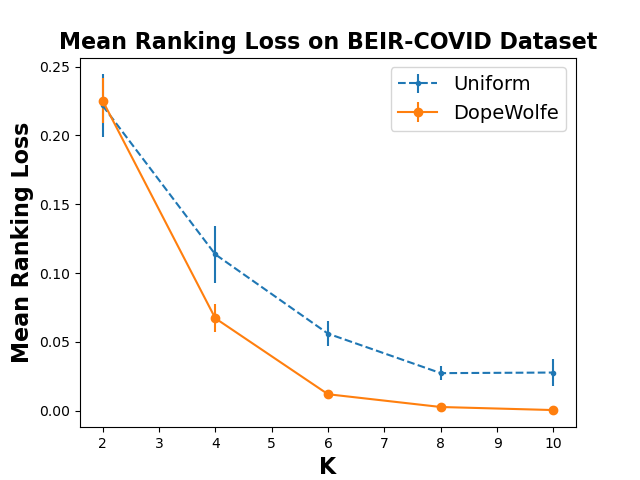}
        \label{fig:real_covid_subfig1}
    }
    \subfigure[Ranking loss on TREC-DL with real feedback]{
        \includegraphics[width=0.4\textwidth]{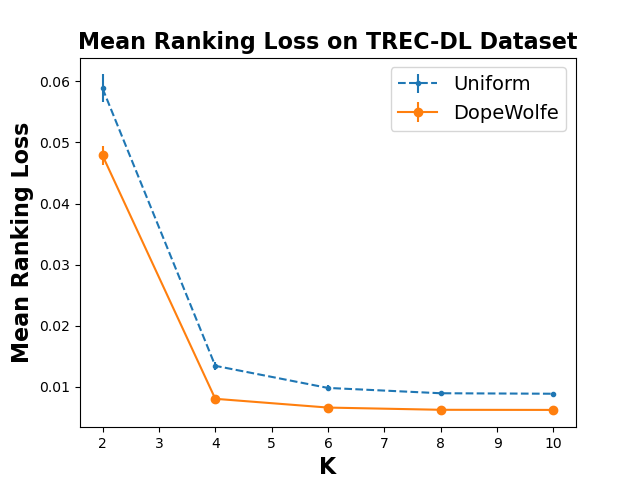}
        \label{fig:real_trecdl_subfig2}
    }
\caption{Ranking models learned through samples selected by \ouralgo achieve lesser \emph{ranking loss} than the ones learned through samples selected uniformly at random.}
    \label{fig:real_feedback_exp}    
\end{figure}

\begin{table}[t]
    \centering
    \begin{tabular}{|l|c|}
        \hline
        \textbf{Method} & \textbf{Time in Seconds} \\
        \hline
        \ouralgo at $1000$ iterations & 1646.70 \\
        DBSCAN $(\epsilon=10^{-5})$ & 37.52 \\
        DBSCAN $(\epsilon=10^{-4})$ & 37.46 \\
        DBSCAN $(\epsilon=10^{-3})$ & 39.63 \\
        DBSCAN $(\epsilon=10^{-2})$ & 69.28 \\
        DBSCAN $(\epsilon=10^{-1})$ & 1346.28 \\
        DBSCAN $(\epsilon=1)$ & 3397.21 \\
        \hline
    \end{tabular}
    \caption{Run time comparison: \ouralgo vs DBSCAN.}
    \label{tab:method_times}
\end{table}

\subsection{Practical Run Time Comparison}
\label{ssec:runtime}

We now compare the run time of \ouralgo with DBSCAN. Although the running times for \ouralgo and DBSCAN are not directly comparable due to different deciding parameters—\ouralgo relies on the number of Frank-Wolfe iterations, while DBSCAN depends on the distance threshold ($\epsilon$). We chose DBSCAN as a baseline clustering algorithm, because it does not require an input parameter specifying distribution sparsity (e.g., $m$-medoids requiring $m$), and it provides cluster centers as actual data points. 

Nonetheless, in \cref{tab:method_times}, we present run-time comparisons for \ouralgo with 1000 Frank-Wolfe iterations and DBSCAN with distance thresholds varying between $[10^{-5}, \ldots, 1]$, as explained in Section~\ref{ssec:synthetic} for the BEIR-COVID dataset with $K=3$, corresponding to 161,700 possible subsets. We observe that DBSCAN takes twice the time of our proposed method for $\epsilon=1$. The total run-time for the DBSCAN-based baseline is 4927.38 seconds; whereas ours is only 1646.70 seconds. Although we used 1000 iterations in all our experiments, convergence is typically achieved in around 100 iterations, yielding similar performance, which only takes 164.67 seconds. This demonstrates that our proposed method not only achieves superior performance but also is computationally efficient.

\section{Conclusions}
\label{sec:conclusions}

We studied learning of human preferences for $N$ items from $K$-way ranking feedback under a limited feedback budget. We developed a computationally and statistically efficient optimal design algorithm for selecting a small collection of $K$-subsets, each for eliciting $K$-way ranking feedback. The policy for collecting the feedback is a solution to the D-optimal design. We solve it using the FW method combined with a novel approximate linear maximization oracle over the simplex of all subsets of size $K$. This allows us to avoid the $O({N \choose K})$ time complexity of a naive solution. We empirically evaluate our approach on both synthetic and real-world datasets, with relatively large $N$ and $K$.

Our work can be extended in multiple directions. First, we only considered one particular model of human feedback, $K$-way questions, popularized by RLHF \citep{ouyang22training} and DPO \citep{rafailov23direct}. Second, the $O(d^2)$ dependence in \cref{prop:prediction error} is suboptimal and can be reduced to $O(d)$, by avoiding the Cauchy-Schwarz inequality in \eqref{eq:prediction error}. Finally, we may experiment with higher dimensional embeddings. This would allow us to efficiently perform LLM alignement \cite{sun2023chatgpt}.

\bibliographystyle{plainnat}
\bibliography{Brano,Subho,ranking_design}

\begin{thebibliography}{60}
\providecommand{\natexlab}[1]{#1}
\providecommand{\url}[1]{\texttt{#1}}
\expandafter\ifx\csname urlstyle\endcsname\relax
  \providecommand{\doi}[1]{doi: #1}\else
  \providecommand{\doi}{doi: \begingroup \urlstyle{rm}\Url}\fi

\bibitem[Abdi and Williams(2010)]{abdi2010principal}
Herv{\'e} Abdi and Lynne~J Williams.
\newblock Principal component analysis.
\newblock \emph{Wiley interdisciplinary reviews: computational statistics}, 2\penalty0 (4):\penalty0 433--459, 2010.

\bibitem[Agichtein et~al.(2006)Agichtein, Brill, Dumais, and Ragno]{agichtein06learning}
Eugene Agichtein, Eric Brill, Susan Dumais, and Robert Ragno.
\newblock Learning user interaction models for predicting web search result preferences.
\newblock In \emph{Proceedings of the 29th Annual International ACM SIGIR Conference}, pages 3--10, 2006.

\bibitem[Ahipa{\c{s}}ao{\u{g}}lu(2015)]{ahipacsaouglu2015first}
Selin~Damla Ahipa{\c{s}}ao{\u{g}}lu.
\newblock A first-order algorithm for the a-optimal experimental design problem: a mathematical programming approach.
\newblock \emph{Statistics and Computing}, 25:\penalty0 1113--1127, 2015.

\bibitem[Audibert et~al.(2010)Audibert, Bubeck, and Munos]{audibert10best}
Jean-Yves Audibert, Sebastien Bubeck, and Remi Munos.
\newblock Best arm identification in multi-armed bandits.
\newblock In \emph{Proceedings of the 23rd Annual Conference on Learning Theory}, pages 41--53, 2010.

\bibitem[Azizi et~al.(2022)Azizi, Kveton, and Ghavamzadeh]{azizi22fixedbudget}
Mohammad~Javad Azizi, Branislav Kveton, and Mohammad Ghavamzadeh.
\newblock Fixed-budget best-arm identification in structured bandits.
\newblock In \emph{Proceedings of the 31st International Joint Conference on Artificial Intelligence}, 2022.

\bibitem[Barzilai and Borwein(1988)]{barzilai1988two}
Jonathan Barzilai and Jonathan~M Borwein.
\newblock Two-point step size gradient methods.
\newblock \emph{IMA journal of numerical analysis}, 8\penalty0 (1):\penalty0 141--148, 1988.

\bibitem[Boyd and Vandenberghe(2004)]{boyd04convex}
Stephen Boyd and Lieven Vandenberghe.
\newblock \emph{Convex Optimization}.
\newblock Cambridge University Press, Cambridge, United Kingdom, 2004.

\bibitem[Bradley and Terry(1952)]{bradley52rank}
Ralph~Allan Bradley and Milton Terry.
\newblock Rank analysis of incomplete block designs: I. the method of paired comparisons.
\newblock \emph{Biometrika}, 39\penalty0 (3-4):\penalty0 324--345, 1952.

\bibitem[Bubeck et~al.(2009)Bubeck, Munos, and Stoltz]{bubeck09pure}
Sebastien Bubeck, Remi Munos, and Gilles Stoltz.
\newblock Pure exploration in multi-armed bandits problems.
\newblock In \emph{Proceedings of the 20th International Conference on Algorithmic Learning Theory}, pages 23--37, 2009.

\bibitem[Casper et~al.(2023)Casper, Davies, Shi, Gilbert, Scheurer, Rando, Freedman, Korbak, Lindner, Freire, et~al.]{casper2023open}
Stephen Casper, Xander Davies, Claudia Shi, Thomas~Krendl Gilbert, J{\'e}r{\'e}my Scheurer, Javier Rando, Rachel Freedman, Tomasz Korbak, David Lindner, Pedro Freire, et~al.
\newblock Open problems and fundamental limitations of reinforcement learning from human feedback.
\newblock \emph{arXiv preprint arXiv:2307.15217}, 2023.

\bibitem[Chen et~al.(2024)Chen, Xiao, Zhang, Luo, Lian, and Liu]{bge-m3}
Jianlv Chen, Shitao Xiao, Peitian Zhang, Kun Luo, Defu Lian, and Zheng Liu.
\newblock Bge m3-embedding: Multi-lingual, multi-functionality, multi-granularity text embeddings through self-knowledge distillation, 2024.

\bibitem[Das et~al.(2024)Das, Chakraborty, Pacchiano, and Chowdhury]{das24active}
Nirjhar Das, Souradip Chakraborty, Aldo Pacchiano, and Sayak~Ray Chowdhury.
\newblock Active preference optimization for sample efficient {RLHF}.
\newblock \emph{CoRR}, abs/2402.10500, 2024.
\newblock URL \url{https://arxiv.org/abs/2402.10500}.

\bibitem[Diamond and Boyd(2016)]{diamond2016cvxpy}
Steven Diamond and Stephen Boyd.
\newblock {CVXPY}: {A} {P}ython-embedded modeling language for convex optimization.
\newblock \emph{Journal of Machine Learning Research}, 17\penalty0 (83):\penalty0 1--5, 2016.

\bibitem[Dunn and Harshbarger(1978)]{dunn1978conditional}
Joseph~C Dunn and S~Harshbarger.
\newblock Conditional gradient algorithms with open loop step size rules.
\newblock \emph{Journal of Mathematical Analysis and Applications}, 62\penalty0 (2):\penalty0 432--444, 1978.

\bibitem[Ester et~al.(1996)Ester, Kriegel, Sander, Xu, et~al.]{ester1996density}
Martin Ester, Hans-Peter Kriegel, J{\"o}rg Sander, Xiaowei Xu, et~al.
\newblock A density-based algorithm for discovering clusters in large spatial databases with noise.
\newblock In \emph{KDD}, volume~96, pages 226--231, 1996.

\bibitem[Fedorov(2013)]{fedorov2013theory}
Valerii~Vadimovich Fedorov.
\newblock \emph{Theory of optimal experiments}.
\newblock Elsevier, 2013.

\bibitem[Frank et~al.(1956)Frank, Wolfe, et~al.]{frank1956algorithm}
Marguerite Frank, Philip Wolfe, et~al.
\newblock An algorithm for quadratic programming.
\newblock \emph{Naval research logistics quarterly}, 3\penalty0 (1-2):\penalty0 95--110, 1956.

\bibitem[Guttman(1946)]{guttman1946enlargement}
Louis Guttman.
\newblock Enlargement methods for computing the inverse matrix.
\newblock \emph{The annals of mathematical statistics}, pages 336--343, 1946.

\bibitem[Hendrych et~al.(2023)Hendrych, Besan{\c{c}}on, and Pokutta]{hendrych2023solving}
Deborah Hendrych, Mathieu Besan{\c{c}}on, and Sebastian Pokutta.
\newblock Solving the optimal experiment design problem with mixed-integer convex methods.
\newblock \emph{arXiv preprint arXiv:2312.11200}, 2023.

\bibitem[Hiranandani et~al.(2019)Hiranandani, Singh, Gupta, Burhanuddin, Wen, and Kveton]{hiranandani19cascading}
Gaurush Hiranandani, Harvineet Singh, Prakhar Gupta, Iftikhar~Ahamath Burhanuddin, Zheng Wen, and Branislav Kveton.
\newblock Cascading linear submodular bandits: Accounting for position bias and diversity in online learning to rank.
\newblock In \emph{Proceedings of the 35th Conference on Uncertainty in Artificial Intelligence}, 2019.

\bibitem[Hofmann et~al.(2013)Hofmann, Whiteson, and de~Rijke]{hofmann13fidelity}
Katja Hofmann, Shimon Whiteson, and Maarten de~Rijke.
\newblock Fidelity, soundness, and efficiency of interleaved comparison methods.
\newblock \emph{ACM Transactions on Information Systems}, 31\penalty0 (4):\penalty0 1--43, 2013.

\bibitem[Hofmann et~al.(2016)Hofmann, Li, and Radlinski]{hofmann16online}
Katja Hofmann, Lihong Li, and Filip Radlinski.
\newblock Online evaluation for information retrieval.
\newblock \emph{Foundations and Trends in Information Retrieval}, 2016.

\bibitem[Jaggi(2013)]{jaggi2013revisiting}
Martin Jaggi.
\newblock Revisiting frank-wolfe: Projection-free sparse convex optimization.
\newblock In \emph{International conference on machine learning}, pages 427--435. PMLR, 2013.

\bibitem[Jin et~al.(2021)Jin, Yang, and Wang]{jin2021pessimism}
Ying Jin, Zhuoran Yang, and Zhaoran Wang.
\newblock Is pessimism provably efficient for offline rl?
\newblock In \emph{International Conference on Machine Learning}, pages 5084--5096. PMLR, 2021.

\bibitem[Kang et~al.(2023)Kang, Shi, Liu, He, and Wang]{kang2023beyond}
Yachen Kang, Diyuan Shi, Jinxin Liu, Li~He, and Donglin Wang.
\newblock Beyond reward: Offline preference-guided policy optimization.
\newblock \emph{arXiv preprint arXiv:2305.16217}, 2023.

\bibitem[Kendall(1948)]{kendall1948rank}
Maurice~George Kendall.
\newblock \emph{Rank correlation methods.}
\newblock Griffin, 1948.

\bibitem[Kerdreux et~al.(2018)Kerdreux, Pedregosa, and d’Aspremont]{kerdreux2018frank}
Thomas Kerdreux, Fabian Pedregosa, and Alexandre d’Aspremont.
\newblock Frank-wolfe with subsampling oracle.
\newblock In \emph{International Conference on Machine Learning}, pages 2591--2600. PMLR, 2018.

\bibitem[Khachiyan(1996)]{khachiyan1996rounding}
Leonid~G Khachiyan.
\newblock Rounding of polytopes in the real number model of computation.
\newblock \emph{Mathematics of Operations Research}, 21\penalty0 (2):\penalty0 307--320, 1996.

\bibitem[Khetan and Oh(2016)]{khetan2016data}
Ashish Khetan and Sewoong Oh.
\newblock Data-driven rank breaking for efficient rank aggregation.
\newblock In \emph{International Conference on Machine Learning}, pages 89--98. PMLR, 2016.

\bibitem[Kiefer(1953)]{kiefer1953sequential}
Jack Kiefer.
\newblock Sequential minimax search for a maximum.
\newblock \emph{Proceedings of the American mathematical society}, 4\penalty0 (3):\penalty0 502--506, 1953.

\bibitem[Kiefer and Wolfowitz(1960)]{kiefer1960equivalence}
Jack Kiefer and Jacob Wolfowitz.
\newblock The equivalence of two extremum problems.
\newblock \emph{Canadian Journal of Mathematics}, 12:\penalty0 363--366, 1960.

\bibitem[Kveton et~al.(2015)Kveton, Szepesvari, Wen, and Ashkan]{kveton15cascading}
Branislav Kveton, Csaba Szepesvari, Zheng Wen, and Azin Ashkan.
\newblock Cascading bandits: Learning to rank in the cascade model.
\newblock In \emph{Proceedings of the 32nd International Conference on Machine Learning}, 2015.

\bibitem[Lagree et~al.(2016)Lagree, Vernade, and Cappe]{lagree16multipleplay}
Paul Lagree, Claire Vernade, and Olivier Cappe.
\newblock Multiple-play bandits in the position-based model.
\newblock In \emph{Advances in Neural Information Processing Systems 29}, pages 1597--1605, 2016.

\bibitem[Lattimore and Szepesvari(2019)]{lattimore19bandit}
Tor Lattimore and Csaba Szepesvari.
\newblock \emph{Bandit Algorithms}.
\newblock Cambridge University Press, 2019.

\bibitem[Li et~al.(2016)Li, Wang, Zhang, and Chen]{li16contextual}
Shuai Li, Baoxiang Wang, Shengyu Zhang, and Wei Chen.
\newblock Contextual combinatorial cascading bandits.
\newblock In \emph{Proceedings of the 33rd International Conference on Machine Learning}, pages 1245--1253, 2016.

\bibitem[Luce(2005)]{luce05individual}
Robert~Duncan Luce.
\newblock \emph{Individual Choice Behavior: A Theoretical Analysis}.
\newblock Dover Publications, 2005.

\bibitem[Manning et~al.(2008)Manning, Raghavan, and Schutze]{manning08introduction}
Christopher Manning, Prabhakar Raghavan, and Hinrich Schutze.
\newblock \emph{Introduction to Information Retrieval}.
\newblock Cambridge University Press, 2008.

\bibitem[McInnes et~al.(2018)McInnes, Healy, and Melville]{mcinnes2018umap}
Leland McInnes, John Healy, and James Melville.
\newblock Umap: Uniform manifold approximation and projection for dimension reduction.
\newblock \emph{arXiv preprint arXiv:1802.03426}, 2018.

\bibitem[Mehta et~al.(2023)Mehta, Das, Neopane, Dai, Bogunovic, Schneider, and Neiswanger]{mehta23sample}
Viraj Mehta, Vikramjeet Das, Ojash Neopane, Yijia Dai, Ilija Bogunovic, Jeff Schneider, and Willie Neiswanger.
\newblock Sample efficient reinforcement learning from human feedback via active exploration.
\newblock \emph{CoRR}, abs/2312.00267, 2023.
\newblock URL \url{https://arxiv.org/abs/2312.00267}.

\bibitem[Mukherjee et~al.(2024)Mukherjee, Lalitha, Kalantari, Deshmukh, Liu, Ma, and Kveton]{mukherjee24optimal}
Subhojyoti Mukherjee, Anusha Lalitha, Kousha Kalantari, Aniket Deshmukh, Ge~Liu, Yifei Ma, and Branislav Kveton.
\newblock Optimal design for human feedback.
\newblock \emph{CoRR}, abs/2404.13895, 2024.
\newblock URL \url{https://arxiv.org/abs/2404.13895}.

\bibitem[Negahban et~al.(2018)Negahban, Oh, Thekumparampil, and Xu]{negahban2018learning}
Sahand Negahban, Sewoong Oh, Kiran~K Thekumparampil, and Jiaming Xu.
\newblock Learning from comparisons and choices.
\newblock \emph{The Journal of Machine Learning Research}, 19\penalty0 (1):\penalty0 1478--1572, 2018.

\bibitem[Nesterov(2014)]{Nesterov}
Yurii Nesterov.
\newblock \emph{Introductory Lectures on Convex Optimization: A Basic Course}.
\newblock Springer Publishing Company, Incorporated, 1 edition, 2014.

\bibitem[Ouyang et~al.(2022)Ouyang, Wu, Jiang, Almeida, Wainwright, Mishkin, Zhang, Agarwal, Slama, Ray, Schulman, Hilton, Kelton, Miller, Simens, Askell, Welinder, Christiano, Leike, and Lowe]{ouyang22training}
Long Ouyang, Jeffrey Wu, Xu~Jiang, Diogo Almeida, Carroll Wainwright, Pamela Mishkin, Chong Zhang, Sandhini Agarwal, Katarina Slama, Alex Ray, John Schulman, Jacob Hilton, Fraser Kelton, Luke Miller, Maddie Simens, Amanda Askell, Peter Welinder, Paul Christiano, Jan Leike, and Ryan Lowe.
\newblock Training language models to follow instructions with human feedback.
\newblock In \emph{Advances in Neural Information Processing Systems 35}, 2022.

\bibitem[Pan et~al.(2022)Pan, Tsang, Chen, Niu, and Sugiyama]{pan2022fast}
Yuangang Pan, Ivor~W Tsang, Weijie Chen, Gang Niu, and Masashi Sugiyama.
\newblock Fast and robust rank aggregation against model misspecification.
\newblock \emph{Journal of Machine Learning Research}, 23\penalty0 (23):\penalty0 1--35, 2022.

\bibitem[Plackett(1975)]{plackett75analysis}
Robin~Lewis Plackett.
\newblock The analysis of permutations.
\newblock \emph{Journal of the Royal Statistical Society: Series C (Applied Statistics)}, 24\penalty0 (2):\penalty0 193--202, 1975.

\bibitem[Pozrikidis(2014)]{pozrikidis2014introduction}
Constantine Pozrikidis.
\newblock \emph{An introduction to grids, graphs, and networks}.
\newblock Oxford University Press, USA, 2014.

\bibitem[Pukelsheim(1993)]{pukelsheim93optimal}
Friedrich Pukelsheim.
\newblock \emph{Optimal Design of Experiments}.
\newblock John Wiley \& Sons, 1993.

\bibitem[Radlinski et~al.(2008)Radlinski, Kleinberg, and Joachims]{radlinski08learning}
Filip Radlinski, Robert Kleinberg, and Thorsten Joachims.
\newblock Learning diverse rankings with multi-armed bandits.
\newblock In \emph{Proceedings of the 25th International Conference on Machine Learning}, pages 784--791, 2008.

\bibitem[Rafailov et~al.(2023)Rafailov, Sharma, Mitchell, Manning, Ermon, and Finn]{rafailov23direct}
Rafael Rafailov, Archit Sharma, Eric Mitchell, Christopher Manning, Stefano Ermon, and Chelsea Finn.
\newblock Direct preference optimization: Your language model is secretly a reward model.
\newblock In \emph{Advances in Neural Information Processing Systems 36}, 2023.

\bibitem[Reimers and Gurevych(2019)]{reimers-2019-sentence-bert}
Nils Reimers and Iryna Gurevych.
\newblock Sentence-bert: Sentence embeddings using siamese bert-networks.
\newblock In \emph{Proceedings of the 2019 Conference on Empirical Methods in Natural Language Processing}. Association for Computational Linguistics, 11 2019.
\newblock URL \url{http://arxiv.org/abs/1908.10084}.

\bibitem[Sun et~al.(2023)Sun, Yan, Ma, Ren, Yin, and Ren]{sun2023chatgpt}
Weiwei Sun, Lingyong Yan, Xinyu Ma, Pengjie Ren, Dawei Yin, and Zhaochun Ren.
\newblock Is chatgpt good at search? investigating large language models as re-ranking agent.
\newblock \emph{arXiv preprint arXiv:2304.09542}, 2023.

\bibitem[Tennenholtz et~al.(2024)Tennenholtz, Chow, Hsu, Shani, Liang, and Boutilier]{tennenholtz2024embedding}
Guy Tennenholtz, Yinlam Chow, Chih-Wei Hsu, Lior Shani, Ethan Liang, and Craig Boutilier.
\newblock Embedding-aligned language models.
\newblock \emph{arXiv preprint arXiv:2406.00024}, 2024.

\bibitem[Tversky and Kahneman(1974)]{tversky74judgment}
Amos Tversky and Daniel Kahneman.
\newblock Judgment under uncertainty: Heuristics and biases.
\newblock \emph{Science}, 185\penalty0 (4157):\penalty0 1124--1131, 1974.

\bibitem[Woodbury(1950)]{woodbury1950inverting}
Max~A Woodbury.
\newblock \emph{Inverting modified matrices}.
\newblock Department of Statistics, Princeton University, 1950.

\bibitem[Xu et~al.(2022)Xu, Wang, Zou, and Liang]{xu2022provably}
Tengyu Xu, Yue Wang, Shaofeng Zou, and Yingbin Liang.
\newblock Provably efficient offline reinforcement learning with trajectory-wise reward.
\newblock \emph{arXiv preprint arXiv:2206.06426}, 2022.

\bibitem[Yang and Tan(2022)]{yang22minimax}
Junwen Yang and Vincent Tan.
\newblock Minimax optimal fixed-budget best arm identification in linear bandits.
\newblock In \emph{Advances in Neural Information Processing Systems 35}, 2022.

\bibitem[Zanette(2023)]{zanette2023realizability}
Andrea Zanette.
\newblock When is realizability sufficient for off-policy reinforcement learning?
\newblock In \emph{International Conference on Machine Learning}, pages 40637--40668. PMLR, 2023.

\bibitem[Zhao and Freund(2023)]{zhao2023analysis}
Renbo Zhao and Robert~M Freund.
\newblock Analysis of the frank--wolfe method for convex composite optimization involving a logarithmically-homogeneous barrier.
\newblock \emph{Mathematical programming}, 199\penalty0 (1):\penalty0 123--163, 2023.

\bibitem[Zhu et~al.(2023)Zhu, Jiao, and Jordan]{zhu23principled}
Banghua Zhu, Jiantao Jiao, and Michael Jordan.
\newblock Principled reinforcement learning with human feedback from pairwise or {$K$}-wise comparisons.
\newblock \emph{CoRR}, abs/2301.11270, 2023.
\newblock URL \url{https://arxiv.org/abs/2301.11270}.

\bibitem[Zong et~al.(2016)Zong, Ni, Sung, Ke, Wen, and Kveton]{zong16cascading}
Shi Zong, Hao Ni, Kenny Sung, Nan~Rosemary Ke, Zheng Wen, and Branislav Kveton.
\newblock Cascading bandits for large-scale recommendation problems.
\newblock In \emph{Proceedings of the 32nd Conference on Uncertainty in Artificial Intelligence}, 2016.

\end{thebibliography}

\clearpage

\clearpage
\onecolumn
\appendix

\section{Related Work}
\label{sec:related work}

Three recent papers studied a similar setting. \citet{mehta23sample} and \citet{das24active} learn to rank pairs of items from pairwise feedback. \citet{mukherjee24optimal} learn to rank lists of $K$ items from $K$-way feedback in \eqref{eq:ranking feedback}. We propose a general framework for learning to rank $N$ items from $K$-way feedback, where $K \geq 2$ and $N \geq K$. Therefore, our setting is more general than in prior works. Our setting is also related to other bandit settings as follows. Due to the sample budget, it is reminiscent of fixed-budget \emph{best arm identification (BAI)} \citep{bubeck09pure,audibert10best,azizi22fixedbudget,yang22minimax}. The main difference is that we do not want to identify the best arm. We want to estimate the mean rewards of $N$ items to sort them. Online learning to rank has also been studied extensively \citep{radlinski08learning,kveton15cascading,zong16cascading,li16contextual,lagree16multipleplay,hiranandani19cascading}. We do not minimize cumulative regret or try to identify the best arm.

We discuss related work on Frank-Wolfe methods next. \citet{hendrych2023solving} optimize an experimental design by the Frank-Wolfe algorithm within a mixed-integer convex optimization framework. \citet{ahipacsaouglu2015first} presents two first-order Frank-Wolfe algorithms with rigorous time-complexity analyses for the A-optimal experimental design. \citet{zhao2023analysis} introduce a generalized Frank-Wolfe method for composite optimization with a logarithmically-homogeneous self-concordant barrier. None of these works focus on ranking problems. Our randomized Frank-Wolfe algorithm addresses a specific computational problem of solving a D-optimal design with $O({N \choose K})$ variables in learning to rank. While other works have learning to rank with ranking feedback, motivated by learning preference models in RLHF \cite{rafailov23direct, kang2023beyond, casper2023open}, our work is unique in applying optimal designs to collect human feedback for learning to rank. \citet{tennenholtz2024embedding} show the benefits of using an optimal design of a state-dependent action set to improve an embedding-aligned guided language agent's efficiency. Additionally, \citet{khetan2016data} and \citet{pan2022fast} provide optimal rank-breaking estimators for efficient rank aggregation under different user homogeneity assumptions. Our work stands out by integrating optimal design with randomized Frank-Wolfe method in the context of ranking a large list of $N$ items through $K$-way ranking feedback.

In offline RL, the agent directly observes the past history of interactions. Note that these actions can be suboptimal and there can be issues of data coverage and distribution shifts. Therefore in recent years pessimism under offline RL has gained traction \cite{jin2021pessimism, xu2022provably, zanette2023realizability}. In contrast to these works, we study offline K-wise preference ranking under PL and BTL models for pure exploration setting. We do not use any pessimism but use optimal design \cite{pukelsheim93optimal, fedorov2013theory} to ensure diversity among the data collected. If the action set is infinite, then approximately optimal designs can sometimes
be found efficiently \cite{lattimore19bandit}. \cite{sun2023chatgpt} investigated generative LLMs, such as ChatGPT and GPT-4, for relevance ranking in IR and found that properly instructed LLMs can deliver competitive, even superior results to state-of-the-art supervised methods on popular IR benchmarks. We note that our approach can further be used in such settings to enhance the performance.

\section{Theoretical convergence rate analyses}
\label{app:analysis}

This section provides the theoretical convergence rate analysis of \ouralgo (\cref{algo:ouralgo}). We do this by first proving a more general result for the broad class of LHSCB problems \eqref{eq:lhscb_problem}. Then we will provide the convergence rate of \ouralgo as a corollary of this result.

\subsection{Generic Randomized FW method for solving the LHSCB problem} \label{app:generic_analysis}

In this section we analyze the convergence of the generic randomize FW method (\cref{algo:rand_fw}) for minimizing composition of $\theta$-LHSCB functions and affine transformations. More precisely, we recall the studied optimization problem
\begin{align}
\min_{y \in \cY}  F(y) = \min_{y \in \cY} f(\mathcal{A}(y)), \label{eq:lhscb_problem_appendix}
\end{align}
where $\cY$ is the convex hull of a set $\cV$ of $\widetilde{N}$ vectors ($\cY = \mathrm{conv}(\cV)$), $\cA: \cY \to \cK$ is a linear operator, $\cK$ is a cone, and $f: \mathrm{int}(\cK) \to \mathbb{R}$ is a \emph{$\theta$-logarithmically-homogeneous self-concordant barrier (LHSCB)} \cite{zhao2023analysis}.
Next we define the LHSCB function.
\begin{definition}
We say that a function $f$ is \emph{$\theta$-logarithmically homogenous self-concordant barrier (LHSCB)} if
\begin{enumerate}
\item $f: \cK \to \mathbb{R}$ is a convex mapping with a regular cone $\cK \subsetneq \mathbb{R}^m$ (closed, convex, pointed and with non-empty interior $\mathrm{int}(\cK)$) as its domain,
\item $f$ is $\theta$-logarithmically homogenous, i.e.~$f(t u) = f(u) - \theta \log t$, for all $u \in \mathrm{int}(\cK)$ and $t > 0$ for some $\theta \geq 1$,
\item $f$ is self-concordant, i.e.~$|D^3 f(u) [v,v,v]|^2 \leq 4 \langle v, \nabla^2 f(u) v\rangle^3 $ for all $u \in \mathrm{int}(\cK)$ and $v \in \mathbb{R}^m$, where $D^3 f(u) [v,v,v]$ is the third order derivative of $f$ at $u$ in the direction $v$ and $\nabla^2 f(u)$ is the Hessian of $f$ at $u$, and
\item $f$ is a barrier, i.e.~$f(u_k) \to +\infty$ for any $(u_k)_{k \geq 0} \subseteq \mathrm{int}(\cK)$ and $u_k \to u \in \mathrm{bd}(\cK)$, where $\mathrm{bd}(\cK)$ is the boundary of $\cK$.
\end{enumerate}
\label{assume:lhscb}
\end{definition}
From the above assumption it is clear that problem \eqref{eq:lhscb_problem_appendix} is a convex problem.

\begin{algorithm}[htbp]
\caption{Randomized Frank-Wolfe Method for LHSCB Problem \eqref{eq:lhscb_problem_appendix}}
\label{algo:rand_fw}
\KwIn{$y_0$ such that $\cA(y_0) \in \mathrm{int}(\cK)$, sample size $R$}
\For{$t = 0, 1, \ldots, T_\mathrm{FW}$}{
    Sample $R$ ($\leq \widetilde{N}$) subsets uniformly at random from $\cV$: $\mathcal{R}_t \sim \mathrm{Uniform}(\{\mathcal{R} \subseteq \mathcal{S} \mid |\mathcal{R}| = R \rceil\})$  \\
    Compute $v_t = \argmin\limits_{v \in \cR_t} \langle \nabla F(y_t), v \rangle$ \tcp{Randomizes LMO}
    Compute $\alpha_t = \argmin\limits_{\alpha \in [0 , 1]} F(x_t + \alpha(v_t - x_t))$ \tcp{Line Search}
    Update $x_{t+1} = x_t + \alpha_t(v_t - x_t)$
}
\end{algorithm}
In \cref{algo:rand_fw}, we provide the generic randomize FW method for solving problem \eqref{eq:lhscb_problem_appendix}.
Before providing its convergence rate we state a known property of LHSCB functions.
It is know that LHSCB functions satisfy the following approximate second-order upper bound. We define, for a given positive semi-definite matrix $M$, the weighted norm $\| u\|_{M} = \sqrt{u^{T}Mu}$.
\begin{proposition}[\cite{zhao2023analysis,Nesterov}]
Let $f: \cK \to \mathbb{R}$ be LHSCB. Then,
\begin{align}
f(u) \leq f(v) + \langle \nabla f(u), v - u \rangle + \omega(\|v - u\|_{\nabla^2 f(u)})\,, \;\; \forall u \in \mathrm{int}(\cK), v \in \cK
\end{align}
where
\begin{equation}
\omega(a) = 
\begin{cases}
-a - \ln(1-a), & a < 1 \\
+\infty,& 1 \leq a.
\end{cases}
\end{equation}
\label{prop:lhscb_ub}
\end{proposition}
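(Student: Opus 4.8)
The plan is to prove the standard self-concordant second-order upper bound underlying this proposition, for which only the self-concordance property (item 3 of \cref{assume:lhscb}) is needed; the logarithmic homogeneity and barrier properties play no role in this particular estimate. The standard device is to reduce the multivariate inequality to a one-dimensional statement along the segment joining the two points. Writing $h = v - u$ and restricting $f$ to this segment, I set $\phi(t) = f(u + t h)$, which is well defined and smooth on $[0,1)$ because $u \in \mathrm{int}(\cK)$, $v \in \cK$, and $\cK$ is convex, so $u + t h \in \mathrm{int}(\cK)$ for $t \in [0,1)$. The value of this reduction is that $\phi'(0) = \langle \nabla f(u), h\rangle$, $\phi''(t) = \normw{h}{\nabla^2 f(u+th)}^2$, and $\phi'''(t) = D^3 f(u+th)[h,h,h]$, so that after substituting these identities the claim reduces to the one-dimensional inequality $\phi(1) \leq \phi(0) + \phi'(0) + \omega(\sqrt{\phi''(0)})$, while the self-concordance condition of \cref{assume:lhscb} translates (after taking square roots) into the one-dimensional bound $|\phi'''(t)| \leq 2\,\phi''(t)^{3/2}$.

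Next I would establish a pointwise curvature bound from this third-order control. If $h = 0$ the claim is trivial, so I assume $h \neq 0$; since $f$ is a barrier on a regular cone its Hessian is positive definite throughout $\mathrm{int}(\cK)$, hence $\phi''(t) > 0$ on $[0,1)$ and $r := \sqrt{\phi''(0)} = \normw{v-u}{\nabla^2 f(u)} > 0$. The key trick is to differentiate the quantity $\psi(t) := \phi''(t)^{-1/2}$: one computes $\psi'(t) = -\tfrac{1}{2}\,\phi''(t)^{-3/2}\,\phi'''(t)$, so the self-concordance bound immediately gives $|\psi'(t)| \leq 1$. Integrating this from $0$ yields $\psi(t) \geq \psi(0) - t = 1/r - t$, and inverting gives the explicit curvature bound $\phi''(t) \leq r^2 / (1 - r t)^2$, valid for every $t$ with $r t < 1$.

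Finally I would integrate twice. If $r \geq 1$ then $\omega(r) = +\infty$ and the inequality holds trivially, so I may assume $r < 1$, in which case $r t < 1$ on all of $[0,1]$ and the curvature bound applies throughout. Taylor's formula with integral remainder gives $\phi(1) - \phi(0) - \phi'(0) = \int_0^1 (1-t)\,\phi''(t)\,dt \leq \int_0^1 (1-t)\, r^2 (1 - r t)^{-2}\,dt$, and a short partial-fractions computation evaluates the right-hand integral to exactly $-r - \ln(1-r) = \omega(r)$; re-expressing $\phi(1), \phi(0), \phi'(0)$ in terms of $f$ then recovers the stated bound. I expect the only delicate points to be (i) justifying nondegeneracy of the Hessian so that $\psi$ is well defined and that the open segment lies in $\mathrm{int}(\cK)$, with the endpoint $v$ handled by a continuity/limiting argument at $t = 1$, and (ii) the sign bookkeeping in the bound $|\psi'| \leq 1$; the final integration is routine. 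The genuine conceptual step — and the crux of the whole argument — is the observation that $\phi''^{-1/2}$ has derivative bounded by $1$, which is precisely what linearizes the third-order self-concordance inequality into the explicit $(1 - rt)^{-2}$ curvature estimate.
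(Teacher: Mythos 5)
The paper does not prove this proposition at all --- it is imported verbatim from \citet{zhao2023analysis} and Nesterov's book --- so there is no in-paper argument to compare against; what you have written is the standard textbook proof, and it is correct. The reduction to $\phi(t)=f(u+th)$, the observation that self-concordance is exactly $\bigl|\tfrac{d}{dt}\phi''(t)^{-1/2}\bigr|\le 1$, the resulting curvature bound $\phi''(t)\le r^2/(1-rt)^2$, and the double integration evaluating to $-r-\ln(1-r)$ are precisely the classical argument (Nesterov's Theorem 4.1.8 / the proof of Proposition 2.2 in \citealp{zhao2023analysis}); your side remarks are also accurate, namely that only self-concordance (plus Hessian nondegeneracy, which holds for a barrier on a pointed cone) is used, and that the case $v\in\mathrm{bd}(\cK)$ is absorbed either by $r\ge 1$ or by noting that boundedness of $\phi$ near $t=1$ contradicts the barrier property. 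One thing you should make explicit, however: what you prove is $f(v)\le f(u)+\langle\nabla f(u),v-u\rangle+\omega(\normw{v-u}{\nabla^2 f(u)})$ (your $\phi(1)\le\phi(0)+\phi'(0)+\omega(\sqrt{\phi''(0)})$), whereas the proposition as printed has $f(u)$ and $f(v)$ transposed across the inequality. The printed version is actually false --- for $f(x)=-\ln x$ on $\cK=\mathbb{R}_+$ with $u=1$, $v=1+s$ and small $s>0$ the right-hand side is $-\ln(1-s^2)-2s<0$ while the left-hand side is $0$ --- so the statement contains a typo and your version is the correct (and intended) one, as confirmed by how it is used in the descent-lemma step of \cref{thm:rand_fw_appendix}, where the gradient and Hessian are taken at the current iterate and the function is evaluated at the new point.
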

Note that the $\omega$ is convex and non-negative in $(-\infty, 1)$.

Now we are ready to provide the convergence rate of \cref{algo:rand_fw}. For the sake of clarity, we restate \cref{thm:rand_fw} with more details below. This proof is inspired from the proof techniques developed in \cite{kerdreux2018frank,zhao2023analysis}. 
\begin{theorem}
Let the initial iterate $y_0$ maps to domain of $f$, i.e.~$\cA(y_0) \in \mathrm{int}(\cK)$, and $\delta_0 = F(y_0) - \min_{y \in \cY} F(y)$,
then Randomized FW method (\cref{algo:rand_fw}) which uniformly samples $R$ elements from $\cV$ for computing the randomized LMO at each step outputs an $\varepsilon$ sub-optimal solution to the $\theta$-LHSCB problem \eqref{eq:lhscb_problem} in expectation, 
after %
\begin{align}
T_\mathrm{FW} = \Big\lceil 5.3 \frac{\widetilde{N}}{R} (\delta_0 + \theta) \max \Big\{\ln \Big(10.6 \frac{\widetilde{N}}{R} \delta_0\Big), 0 \Big\}\Big\rceil + \Big\lceil 12 \frac{\widetilde{N}}{R} \theta^2 \max \Big\{\frac1\varepsilon - \max \Big\{ \frac1{\delta_0}, 10.6 \frac{\widetilde{N}}{R} \Big\}, 0 \Big\} \Big\rceil,
\end{align}
iterations, i.e.~$\mathbb{E}[F(y_{T_\mathrm{FW}})] - \min_{y \in \cY} F(y) \leq \varepsilon$.
\label{thm:rand_fw_appendix}
\end{theorem}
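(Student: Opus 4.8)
The plan is to combine the two-phase self-concordant analysis of the deterministic Frank-Wolfe method for LHSCB objectives (\cite{zhao2023analysis}) with the subsampling argument for randomized LMOs (\cite{kerdreux2018frank}), carrying everything through in expectation. Write $\delta_t = F(y_t) - \min_{y \in \cY} F(y)$ for the optimality gap, let $v_t^\star \in \argmin_{v \in \cV} \langle \nabla F(y_t), v\rangle$ be the exact (full) FW vertex, and set the FW gap $g_t = \langle \nabla F(y_t), y_t - v_t^\star\rangle$. Since $v_t^\star$ maximizes the linear gap over $\cV$ and hence over $\cY = \mathrm{conv}(\cV)$, convexity of $F$ gives $g_t \ge \langle \nabla F(y_t), y_t - y^\star\rangle \ge \delta_t$. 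Randomization enters only through the LMO: because $\cR_t$ is a uniform $R$-subset of $\cV$, the event $E_t = \{v_t^\star \in \cR_t\}$ has probability $R/\widetilde N$, and on $E_t$ the sampled vertex equals $v_t^\star$ so the full deterministic step is taken; off $E_t$ I use only that the line search never increases the objective (the step $\alpha=0$ is feasible), so $\delta_{t+1}\le\delta_t$. Conditioning on $y_t$ then yields the pessimistic recursion $\E{\delta_{t+1}\mid y_t}\le \delta_t - \tfrac{R}{\widetilde N}\,\Delta_t$, where $\Delta_t$ is the guaranteed one-step decrease of the deterministic method at $y_t$; this is exactly the origin of the $\widetilde N/R$ factor in the stated bound.

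Next I would quantify $\Delta_t$ using self-concordance. Applying \cref{prop:lhscb_ub} along $d_t = v_t^\star - y_t$ with $u = \cA(y_t)$ gives $F(y_t + \alpha d_t) \le F(y_t) - \alpha g_t + \omega(\alpha D_t)$, where $D_t = \|\cA d_t\|_{\nabla^2 f(u)}$ and $\omega$ is as in \cref{prop:lhscb_ub}; optimizing the line search over $\alpha\in[0,1]$ yields a decrease $\Delta_t \ge \omega^\star(g_t/D_t)$ with $\omega^\star(a) = a - \ln(1+a)$ in the interior-step regime, and a separate, larger decrease when the unconstrained optimal step exceeds one. The essential step is to control the local curvature $D_t$ purely from the LHSCB structure, since $F$ is not Lipschitz-smooth. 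Here I would invoke the Euler identities for $\theta$-logarithmically-homogeneous barriers, $\nabla^2 f(u)\,u = -\nabla f(u)$ and $\langle \nabla f(u), u\rangle = -\theta$, which give $\|\cA(y_t)\|_{\nabla^2 f(u)} = \sqrt\theta$ and $\|\nabla f(u)\|_{(\nabla^2 f(u))^{-1}} = \sqrt\theta$; combined with the triangle inequality and the Cauchy-Schwarz relation $g_t \le \sqrt\theta\, D_t$, these bound $g_t/D_t$ below by a quantity depending only on $g_t$ and $\theta$.

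With the one-step bound in hand I would run the standard two-phase telescoping. Because $\omega^\star$ is convex and increasing and $g_t \ge \delta_t$, Jensen's inequality passes from $\E{\omega^\star(g_t/D_t)}$ to $\omega^\star$ evaluated at $\E{\delta_t}$, reducing the random trajectory to a deterministic recursion in $a_t = \E{\delta_t}$. For $a_t$ large, $\omega^\star$ is essentially linear and the recursion $a_{t+1} \le a_t(1 - c R/\widetilde N)$ is geometric, contributing the logarithmic warm-up term $\lceil 5.3 \tfrac{\widetilde N}{R}(\delta_0 + \theta)\max\{\ln(10.6 \tfrac{\widetilde N}{R}\delta_0), 0\}\rceil$. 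Once $a_t$ falls below the threshold of order $\theta/(\widetilde N/R)$, one has $\omega^\star(g_t/D_t) \gtrsim a_t^2/\theta^2$, so $a_{t+1} \le a_t - c'\tfrac{R}{\widetilde N}\,a_t^2/\theta^2$; inverting this standard recursion on $1/a_t$ telescopes to an $1/\varepsilon$ rate and produces the second bracketed term with its $\widetilde N\theta^2/R$ prefactor.

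The main obstacle I anticipate is the curvature control in the second paragraph: bounding $D_t$, and hence lower-bounding the per-step progress $\omega^\star(g_t/D_t)$, without any global smoothness of $F$. This is precisely the ingredient absent from the Lipschitz-smooth randomized FW analysis of \cite{kerdreux2018frank}, and it must be supplied from the LHSCB geometry through the Euler identities and the self-concordant upper bound of \cref{prop:lhscb_ub}. A secondary but delicate point is the bookkeeping that keeps the randomized progress factor $R/\widetilde N$ and the two phase thresholds consistent, so that the $\max\{\cdot,0\}$ truncations and the constants $5.3$, $10.6$, $12$ emerge as stated; this requires tracking the interior-step versus clamped-step cases carefully throughout the telescoping.
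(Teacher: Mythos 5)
Your proposal is correct and follows essentially the same route as the paper: the same pessimistic conditioning on the event $\{v_t^\star \in \cR_t\}$ (probability $R/\widetilde N$, with zero guaranteed progress otherwise via the $\alpha=0$ feasibility of the line search), the same per-step decrease via \cref{prop:lhscb_ub}, and the same two-phase telescoping inherited from the deterministic LHSCB analysis of \citet{zhao2023analysis}. The paper simply states the discounted descent inequality $\mathbb{E}[F(y_{t+1})\mid y_t] \le F(y_t) + (R/\widetilde N)\min_{\alpha}[-\alpha G_t^* + \omega(\alpha D_t^*)]$ and defers the remaining curvature control and phase bookkeeping to Theorem 1 of that reference, whereas you spell those steps out.
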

\begin{proof}
First notice that since $\cA$ is a linear mapping, the update rule in \cref{algo:rand_fw} can be re-written as
\begin{equation}
v_t = \argmin_{v \in \cR_t} \langle \nabla F(y_t), v \rangle
= \argmin_{v \in \cR_t}  \langle \cA^\top (\nabla f(\cA(y_t))), v \rangle
= \argmin_{v \in \cR_t} \langle \nabla f(\cA(y_t)), \cA(v) \rangle,
\end{equation}
and
\begin{equation}
\alpha_{t} = \argmin_{\alpha \in [0, 1]} F((1-\alpha) y_t + \alpha v) 
= \argmin_{\alpha \in [0, 1]} f(\cA((1-\alpha) y_t + \alpha v)).
\end{equation}
Above also follows from the well-known affine invariance property of FW method \cite{jaggi2013revisiting}.
Next using \cref{prop:lhscb_ub} we get
\begin{align*}
F((1-\alpha) y_{t} + \alpha v_t)
&= f(\cA((1-\alpha) y_{t} + \alpha v_t)) \nonumber \\
&\leq f(\cA(y_{t})) + \langle \nabla f(\cA(y_{t})), \cA((1-\alpha) y_{t} + \alpha v_t) \rangle + \omega(\|\cA((1-\alpha) y_{t} + \alpha v_t)\|_{\nabla^2 f(\cA(y_{t}))}) \nonumber \\
&\leq F(y_{t}) + \alpha \langle \nabla f(\cA(y_{t})), \cA(v_t - y_{t}) \rangle + \omega(\alpha \|\cA(v_t - y_{t})\|_{\nabla^2 f(\cA(y_{t}))}).
\end{align*}

Let $G_t = -\langle \nabla f(\cA(y_{t})), \cA(v_t - y_{t}) \rangle$ and $D_t = \|\cA(v_t - y_{t})\|_{\nabla^2 f(\cA(y_{t}))}$. We also assume that $v_t^* \in \argmin_{v \in \cV} \langle \nabla f(\cA(y_t)), v \rangle$ is the output of the LMO if we run regular deterministic FW method (general version of \cref{algo:fw}) at iteration $t$. Note that we can also define this LMO output as $v_t^* \in \argmin_{v \in \cY} \langle \nabla f(\cA(y_t)), v \rangle$ since $\cY = \mathrm{conv}(\cV)$ and a minimizier of a linear functional on a closed convex polytope is one of its vertices. Using this definition of $v^*_t$ we also define $G_t^* = -\langle \nabla f(\cA(y_{t})), \cA(v_t^* - y_{t}) \rangle$ and $D_t^* = \|\cA(v_t^* - y_{t})\|_{\nabla^2 f(\cA(y_{t}))}$.
Then taking the minimum w.r.t.~$\alpha$ on both sides of the above inequality yields
\begin{equation}
F(y_{t+1}) = \min_{\alpha \in [0,1]}F((1-\alpha) y_{t} + \alpha v_t) \leq F(y_{t}) + \min_{\alpha \in [0,1]} [-\alpha G_t + \omega(\alpha D_t)],
\end{equation}
and by taking expectation conditioned on the randomness of $y_t$ we get
\begin{equation}
\mathbb{E}[F(y_{t+1})\,|\,y_t] \leq F(y_{t}) + \mathbb{E}[\min_{\alpha \in [0,1]} [-\alpha G_t + \omega(\alpha D_t)] \,|\,y_t].
\end{equation}
Notice that when $\alpha=0$ we obviously have that $-\alpha G_t + \omega(\alpha D_t) = 0$ since $\omega(0) = 0$. Therefore, $\min_{\alpha \in [0,1]} [-\alpha G_t + \omega(\alpha D_t)] \leq 0$. Note that, in contrast to regular FW method (\cref{algo:fw}) where $G_t$ is always negative, for the worst case sampling of $\cR_t$, the quantity $G_t$ can be positive. Therefore, choosing the $\alpha$ dynamically is necessary to ensure non-increasing objective values in randomized FW \cite{kerdreux2018frank}.
Next, we upper bound the above inequality as
\begin{align}
\mathbb{E}[F(y_{t+1})\,|\,y_t] &\leq F(y_{t}) + (1-\mathrm{Prob}(v_t^* \in \cV_t)) \cdot 0 + \mathrm{Prob}(v_t^* \in \cV_t) \min_{\alpha \in [0,1]} [-\alpha G_t^* + \omega(\alpha D_t^*)] \nonumber \\
&= F(y_{t}) + (R/\widetilde{N}) \min_{\alpha \in [0,1]} [-\alpha G_t^* + \omega(\alpha D_t^*)].
\end{align}

Now notice that the above ``descent lemma'' inequality is similar to \cite[Inequality 2.7]{zhao2023analysis}, except for the conditional expectation on the LHS and the discount factor $(R/\widetilde{N}) \leq 1$ on second term which corresponds to the minimum decrease in objective value. Therefore, rest of the proof follow similar arguments as \cite[Theorem 1]{zhao2023analysis}.
\end{proof}
This proves that randomized FW method convergence to the the minimizer of the LHSCB problem in $((\widetilde{N}/R)\varepsilon^{-1})$ steps. Compared to regular FW method, randomized version increases the iteration complexity by a factor of $\widetilde{N}/R$. We see similar increase in iteration complexity even for convex problems with bounded Lipschitz smoothness \cite{kerdreux2018frank}.
In the next section we specialize this result to provide a convergence rate guarantee for \ouralgo (\cref{algo:ouralgo}).

\subsection{Theoretical analysis of \ouralgo}
Again for the sake of clarity we first restate \cref{cor:ouralgo} providing the iteration complexirt of \ouralgo (\cref{algo:ouralgo_subroutines}) with more details and then provide its proof.
\begin{corollary}[of \cref{thm:rand_fw_appendix}]
Let $V_{\pi_0}$ be full-rank, $\gamma=0$, $R \leq {N \choose K}$, $\alpha_\mathrm{tol}$ be small enough and $\delta_0 = \max_{\pi \in \Delta^{\mathcal{S}}} g(\pi) - g(\pi_0)$,
then \ouralgo (\cref{algo:ouralgo}) which uniformly samples $R$ subsets from $\mathcal{S}$ for computing the randomized LMO at each step, outputs an $\varepsilon$ sub-optimal solution to the D-optimal design problem \eqref{eq:optimal design} in expectation, 
after %
\begin{align}
T_\mathrm{od} = \Big\lceil 5.3 \frac{{N \choose K}}{R} (\delta_0 + d) \max \Big\{\ln \Big(10.6 \frac{{N \choose K}}{R} \delta_0\Big), 0 \Big\}\Big\rceil + \Big\lceil 12 \frac{{N \choose K}}{R} d^2 \max \Big\{\frac1\varepsilon - \max \Big\{ \frac1{\delta_0}, 10.6 \frac{{N \choose K}}{R} \Big\}, 0 \Big\} \Big\rceil
\end{align}
iterations, i.e.~$\max_{\pi \in \Delta^{\mathcal{S}}} g(\pi) - \mathbb{E}[g(\pi_{T_\mathrm{od}})] \leq \varepsilon$.
\label{cor:ouralgo_appendix}
\end{corollary}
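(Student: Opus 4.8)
The plan is to derive \cref{cor:ouralgo_appendix} directly from \cref{thm:rand_fw_appendix} by verifying that the D-optimal design problem \eqref{eq:optimal design} is an instance of the LHSCB problem \eqref{eq:lhscb_problem_appendix}, and then substituting the correct values of the problem parameters $\widetilde{N}$ and $\theta$ into the iteration-count formula. The first step is to reconcile the sign conventions: the theorem is stated for minimization of $F$, whereas \ouralgo maximizes $g(\pi) = \log\det(V_\pi)$. So I would set $F(\pi) = -g(\pi) = -\log\det(V_\pi)$ and rewrite the D-optimal design as $\min_{\pi \in \Delta^{\mathcal{S}}} F(\pi)$. The sub-optimality gap then translates directly: $\max_{\pi} g(\pi) - g(\pi_{T_\mathrm{od}}) = F(\pi_{T_\mathrm{od}}) - \min_\pi F(\pi)$, so $\delta_0 = \max_{\pi \in \Delta^{\mathcal{S}}} g(\pi) - g(\pi_0)$ matches the $\delta_0$ of the theorem.

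Next I would identify the three structural objects. The feasible set is $\cY = \Delta^{\mathcal{S}}$, which is the convex hull of the vertex set $\cV = \{e_S : S \in \cS\}$, so $\widetilde{N} = |\cS| = {N \choose K}$. The linear operator is $\cA(\pi) = V_\pi = \sum_{S} \pi(S) A_S A_S^\top$, mapping into the cone $\cK = \mathbb{S}^d_+$ of PSD $d \times d$ matrices (a regular cone), and the barrier is $f(V) = -\log\det(V)$ on $\mathrm{int}(\cK)$. The key fact to cite is that $f(V) = -\log\det(V)$ is a $\theta$-LHSCB on the PSD cone with barrier parameter $\theta = d$; this is the standard self-concordant barrier for the PSD cone, and logarithmic homogeneity follows from $-\log\det(tV) = -\log\det(V) - d\log t$. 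This pins down $\theta = d$. The full-rank assumption on $V_{\pi_0}$ with $\gamma = 0$ is exactly the requirement $\cA(\pi_0) = V_{\pi_0} \in \mathrm{int}(\cK)$, and the ``small enough $\alpha_\mathrm{tol}$'' condition ensures that \goldensearch realizes the exact line search assumed in \cref{algo:rand_fw} to within negligible error. Substituting $\widetilde{N} = {N \choose K}$ and $\theta = d$ into the $T_\mathrm{FW}$ bound of \cref{thm:rand_fw_appendix} yields the stated $T_\mathrm{od}$ formula verbatim.

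The main obstacle I anticipate is not the substitution itself but the verification that \ouralgo genuinely implements \cref{algo:rand_fw} under the affine map $\cA$. Specifically, I must check that the randomized LMO over $\cR_t$ in \ouralgo (maximizing $G_t(S) = \langle A_S A_S^\top, V_{\pi_t}^{-1}\rangle$) coincides with $\argmin_{v \in \cR_t} \langle \nabla F(y_t), v\rangle$; since $\nabla F(\pi) = -\nabla g(\pi)$ and the partial derivative is $\partial g / \partial \pi(S) = \langle A_S A_S^\top, V_\pi^{-1}\rangle$ by \eqref{eq:gradient}, maximizing $G_t$ over $\cR_t$ is exactly minimizing $\langle \nabla F, e_S\rangle$ over $\cR_t$, as required. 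A secondary subtlety is that \cref{thm:rand_fw_appendix} gives an in-expectation bound, so the corollary statement correctly reports $\mathbb{E}[g(\pi_{T_\mathrm{od}})]$; I would flag that the low-rank and sparse update subroutines (\updateinverse, the $\log\det$ update) only affect per-iteration cost and do not alter the iterates, so they are irrelevant to the convergence count. Once these identifications are confirmed, the corollary is an immediate specialization and the proof concludes.
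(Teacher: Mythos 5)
Your proposal is correct and follows essentially the same route as the paper's own proof: both reduce the D-optimal design to the LHSCB framework by setting $F=-g$, $\cK=\mathbb{S}^d_+$, $f=-\log\det$ with barrier parameter $\theta=d$, $\cA(\pi)=V_\pi$, and $\widetilde{N}={N\choose K}$, then invoke \cref{thm:rand_fw_appendix}. Your explicit check that the randomized LMO of \ouralgo (maximizing $G_t(S)$ over $\cR_t$) coincides with $\argmin_{v\in\cR_t}\langle\nabla F,v\rangle$ is a detail the paper delegates to its algorithm-description section, but it is the same argument.
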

\begin{proof}
Let $\mathbb{S}_{+}^d$ be the regular cone of all PSD matrices of dimension $d \times d$. Then we notice that the maximization problem in the D-optimal design \eqref{eq:optimal design} can be written as a special case of the minimization problem \eqref{eq:lhscb_problem_appendix} by setting $F = -g$, $y = \pi$, $\cK = \mathbb{S}_{+}^d$, $f = -\log\det: \mathbb{S}_{+}^d \to \mathbb{R}$, $\cA=\sum_{S \in \cS} \pi(S) A_S A_S^\top: \pi \to \mathbb{S}_{+}^d$, and $\cV = \cS$ ($\widetilde{N} = {N \choose K}$). It is known that $f = \log\det$ is an LHSCB function (\cref{assume:lhscb}) with $\theta=d$ \cite{zhao2023analysis}. Moreover, $\cA$ maps to $\mathbb{S}_{+}^d$ because $\pi(S) A_S A_S^\top$ is PSD for all $S \in \cS$.

This equivalence implies that when $\alpha_{\mathrm{tol}} \to 0$ and $\gamma=0$, running \ouralgo (\cref{algo:ouralgo}) is equivalent to running randomized FW method (\cref{algo:rand_fw}) for the above specialization. Please see \cref{sec:scalable algorithm} for description of how various sub-routines of \cref{algo:ouralgo} implements varioius steps of \cref{algo:rand_fw}.
Finally, we can satisfy the condition $y_0 = \mathrm{int}(\cK)$ if $V_{\pi_0}$ is full rank since $V_{\pi_0} \in \mathrm{int}(\mathbb{S}_{+}^d)$. Then the the iteration complexity of \ouralgo directly follows from \cref{thm:rand_fw_appendix}.
\end{proof}

Note that above theorem requires that $\alpha_{\mathrm{tol}}$ is small enough, but since \goldensearch is an exponentially fast algorithm, $\alpha_{\mathrm{tol}}$ can reach machine precision in a very few steps. For example in our implementation we set $\alpha_{\mathrm{tol}} = 10^{-16}$ and this is achieved in $76$ steps of \goldensearch.

\section{Additional algorithmic details and sub-routines for \ouralgo}
\label{app:algo_additional_details}

In this section, we provide additional details for \ouralgo (\cref{algo:ouralgo}). We begin by providing the complete pseudocode for the three undefined sub-routines of \ouralgo (\updateinverse, \goldensearch, \updatelogdet) in \cref{algo:ouralgo_subroutines}. Next we provide explanation for various algorithmic choices made in \ouralgo.

\begin{algorithm}[t]
\caption{Sub-routines required for \cref{algo:ouralgo}}
\label{algo:ouralgo_subroutines}

\SetKwFunction{FMain}{\updateinversenox}
\SetKwProg{Fn}{Sub-routine}{:}{}
\Fn{\FMain{$V^{\mathrm{inv}}, A, \alpha$}\label{algo-step:update_inverse_subroutine}}{
    Set $r={K \choose 2}$, $\widetilde{V}^{\mathrm{inv}} = {V}^{\mathrm{inv}}/(1-\alpha)$, and $\widetilde{A} = \sqrt{\alpha} A$ 
    
    $V_+^{\mathrm{inv}} = \widetilde{V}^{\mathrm{inv}} - V^{\mathrm{inv}} \widetilde{A} \bigg(\mathbf{I}_{r \times r} + \widetilde{A}^T  V^{\mathrm{inv}} \widetilde{A} \bigg)^{-1} \widetilde{A}^T V^{\mathrm{inv}}$ 

    \Return $V_+^{\mathrm{inv}}$
}

\SetKwFunction{FMain}{\goldensearchnox}
\SetKwProg{Fn}{Sub-routine}{:}{}
\Fn{\FMain{$V^{\mathrm{inv}}, A, \alpha_\mathrm{tol}$}\label{algo-step:golden_search_subroutine}}{
    Set $\varphi = (\sqrt{5}+1)/2, \alpha_a = 0, \alpha_h = \alpha_b = 1$ 

    Initialize: \\
    $\alpha_c = \alpha_a + \alpha_h \varphi^{-2}, \alpha_d = \alpha_a + \alpha_h\varphi^{-1}$  \\
    $V_c^{\mathrm{ld}} = \text{\updatelogdet}(V^{\mathrm{inv}}, A, \alpha_c)$  \\
    $V_d^{\mathrm{ld}} = \text{\updatelogdet}(V^{\mathrm{inv}}, A, \alpha_d)$
    
    \While{$|\alpha_a - \alpha_b| \geq \alpha_\mathrm{tol}$}{
        $\alpha_h = \alpha_h \varphi^{-1}$ 

        \If{$V_c^{\mathrm{ld}} > V_d^{\mathrm{ld}}$}{
            $\alpha_b, \alpha_d, V_d^{\mathrm{ld}}, \alpha_c = \alpha_d, \alpha_c, V_c^{\mathrm{ld}}, \alpha_a + \alpha_h \varphi^{-2}$

            $V_c^{\mathrm{ld}} = \text{\updatelogdet}(V^{\mathrm{inv}}, A, \alpha_c)$ 
        }
        \Else{
            $\alpha_a, \alpha_c, V_c^{\mathrm{ld}}, \alpha_d = \alpha_c, \alpha_d, V_d^{\mathrm{ld}}, \alpha_a + \alpha_h \varphi^{-1}$

            $V_d^{\mathrm{ld}} = \text{\updatelogdet}(V^{\mathrm{inv}}, A, \alpha_d)$ 
        }
    }

    \Return $(\alpha_a+\alpha_b)/2$
}

\SetKwFunction{FMain}{\updatelogdetnox}
\SetKwProg{Fn}{Sub-routine}{:}{}
\Fn{\FMain{$V^{\mathrm{inv}}, A, \alpha$}
\label{algo-step:inverse_update_subroutine}}{
    Set $r={K \choose 2}$, $\widetilde{V}^{\mathrm{inv}} = {V}^{\mathrm{inv}}/(1-\alpha)$, and $\widetilde{A} = \sqrt{\alpha} A$
    
    $V_+^{\mathrm{ld}} = d\log(1-\alpha) + \log\det\Big(\mathbf{I}_{r \times r} + \widetilde{A}^\top \widetilde{V}^{\mathrm{inv}} \widetilde{A} \Big)$ 

    \Return $V_+^{\mathrm{ld}}$
}

\end{algorithm}

\subsection{Low-rank update for the $\log\det$ objective}
\label{app:logdet_update}

As mentioned in \cref{sec:line_search} computing objective function values involves computing the $\log\det$ of a $d \times d$ matrix which has a worst case complexity of $O(d^3)$. However, we simplify this computation by noting that $V_{\pi}$ is only updated with the $r={K \choose 2}$ rank matrix $A_{S} A_{S}^\top$ (for some $S$). Since $A_{S} \in \mathbb{R}^{r \times d}$ this is a low-rank matrix when $r \ll d$, which allows us to simplify the relevant change in objective function value after the convex combination with a step size $\alpha$ as follows. For the sake of simplicity, we denote $\widetilde{V}^{\mathrm{inv}} = (V_{\pi})^{-1}/(1-\alpha)$ and $\widetilde{A} = \sqrt{\alpha} A_{S}$. Then,
\begin{align}
&g((1-\alpha) \cdot \pi + \alpha \cdot e_{S}) - g(\pi) \nonumber \\
&= \log\det\big((1-\alpha) V_{\pi} + \alpha A_{S} A_{S}^\top \big) - \log\det(V_{\pi}) \nonumber \\
&= \log\det\big((1-\alpha)(\mathbf{I}_{d \times d} + \widetilde{A} \widetilde{A}^\top \widetilde{V}^{\mathrm{inv}})V_{\pi}\big) - \log\det(V_{\pi}) \nonumber \\
&= d \log(1-\alpha) + \log\det(\mathbf{I}_{d \times d} + \widetilde{A} \widetilde{A}^\top \widetilde{V}^{\mathrm{inv}} ) \nonumber \\
&= d \log(1-\alpha) + \log\det(\mathbf{I}_{r \times r} +\widetilde{A}^\top \widetilde{V}^{\mathrm{inv}} \widetilde{A} ), \label{eq:delta_logdet}
\end{align}
where the third equality follows from the facts that $\log\det(BC) = \log\det(B) + \log\det(C)$ and $\log\det(cB_{d \times d}) = d\log(c) + \log\det(B)$, and the third equality follows from the Weinstein–Aronszajn identity $\log\det(\mathbf{I} + BC) = \log\det(\mathbf{I} + CB)$ \cite{pozrikidis2014introduction}. \goldensearch sub-routine (\cref{algo-step:golden_search_subroutine} of \cref{algo:ouralgo_subroutines}) in turn uses the \updatelogdet sub-routine (\cref{algo-step:inverse_update_subroutine} of \cref{algo:ouralgo_subroutines}) which implements \eqref{eq:delta_logdet} assuming the access to $V_{\pi}^{-1}$. Since \eqref{eq:delta_logdet} computes $\log\det$ of an $r \times r$ matrix we reduce complexity of computing change in objective function values from $O(d^3)$ to $O(r^3 + rd^2)$. Note that one can further reduce the total complexity of \goldensearch from $O(\log_{\varphi}(\alpha_{\mathrm{tol}}^{-1}) (r^3 + rd^2))$ to $O(\log_{\varphi}(\alpha_{\mathrm{tol}}^{-1}) + r^3 + rd^2)$ by computing eigenvalues $\{\lambda_i\}$ of $A_{S_t}^\top V_{\pi_t}^{-1}A_{S_t}$ once and then computing the change in objective function values as $d \log(1-\alpha) + \sum_i^r \log(1+\alpha \lambda_i)$, however this approach may be numerically less stable.

\subsection{Low-rank inverse update}
\label{sec:inverse_update}

Here we expand on how \ouralgo computes the inverse of the ${d \times d}$ matrix $V_{\pi_t}$ used in \partialgrad (\cref{algo-step:partial_grad_subroutine} of \cref{algo:ouralgo}) and \updatelogdet (\cref{algo-step:inverse_update_subroutine} of \cref{algo:ouralgo_subroutines}) sub-routines. Naively implementing it at every iteration incurs a cost of $O(d^3)$ every iteration.
Instead, \updateinverse sub-routine (\cref{algo-step:update_inverse_subroutine} of \cref{algo:ouralgo_subroutines}) is used in \ouralgo to iteratively update the matrix $V^{\mathrm{inv}}$ storing the inverse as follows
\begin{align}
V^{\mathrm{inv}}_+ &= ((1-\alpha) V_{\pi} + \alpha A_{S} A_{S}^\top)^{-1} \nonumber \\
&= ((\widetilde{V}^{\mathrm{inv}})^{-1} + \widetilde{A} \widetilde{A}^\top)^{-1} \\
&= \widetilde{V}^{\mathrm{inv}} - \widetilde{V}^{\mathrm{inv}} \widetilde{A} (\mathbf{I}_{r \times r} + \widetilde{A}^\top \widetilde{V}^{\mathrm{inv}} \widetilde{A})^{-1} \widetilde{A}^\top \widetilde{V}^{\mathrm{inv}}, \nonumber
\end{align}
where we used the notations from \cref{sec:line_search} and the second equality used the Woodburry matrix inversion identity \cite{guttman1946enlargement,woodbury1950inverting} with $r={K \choose 2}$. When $r \ll d$, this update rule improves the complexity of finding the inverse to $O(r^3 + rd^2)$. Note that one of the inputs to \ouralgo (\cref{algo:ouralgo}) is $ V^{\mathrm{inv}}_0$ which has an amortized cost of $O(d^3/T_{\mathrm{od}})$.

\subsection{Sparse iterate update}
\label{sec:iterate_update}

Finally we discuss how we store and update the large ${N \choose K}$ dimensional iterate $\pi_t$ (\cref{algo-step:update-iterate} of \cref{algo:ouralgo}). If we initialize \ouralgo with a sparse $\pi_0$ and if $T_{\mathrm{od}} \ll {N \choose K}$, then its iterates are sparse since the update rule $\pi_{t+1} = (1-\alpha_t) \cdot \pi_t + \alpha_t \cdot e_{S_t}$ consists of the one-hot probability vector $e_{S_t}$. This implies that maintaining $\pi_{t}$ as a ${N \choose K}$ dimensional dense vector is unnecessary and expensive. Therefore, \ouralgo maintains it as sparse vector $\pi_t^{(sp)}$ (\texttt{scipy.sparse.csc\_array}\footnote{\url{https://scipy.org/}}) of size ${N \choose K}$. This begs the question how \ouralgo maps a subset $S \in \mathcal{S}$ to an index of $\pi_t^{(sp)}$. \ouralgo answers this by using the combinadics number system of order $K$\footnote{\url{https://en.wikipedia.org/wiki/Combinatorial_number_system}}  which defines a bijective mapping, $\mathcal{C}_K: \mathcal{S} \to [{N \choose K}]$, from the collection of subsets of size $K$ to integers. Therefore our update rule in terms of $\pi_t^{(sp)}$ translates to $\pi_{t+1}^{(sp)} = (1-\alpha_t) \cdot \pi_t^{(sp)} + \alpha_t \cdot e_{\mathcal{C}_K(S_t)}^{(sp)}$, where $e_{\mathcal{C}_K(S_t)}^{(sp)}$ is the basis vector for the $\mathcal{C}_K(S_t)$-th dimension. These modifications improves the worst case complexity of updating and maintaining $\pi_t$ after $t$ iterations from $O({N \choose K})$ to only $O(t)$.

\section{Extended Experiments}
\label{app:experiments}

In this section we provide additional experimental details and results.

\subsection{Synthetic Feedback Setup}
\label{app:synthexperiments}

In Figure~\ref{fig:synthndcg}, we show the mean NDCG metric (higher is better) for the experiment in Section~\ref{ssec:synthetic} for $K=2,3$ on the three datasets. The observations are consistent with Section~\ref{ssec:synthetic}, where \ouralgo achieves better ranking performance than baselines.

\begin{figure}[h]
    \centering
    \subfigure[BEIR-COVID, $K=2$]{
        \includegraphics[width=0.31\textwidth]{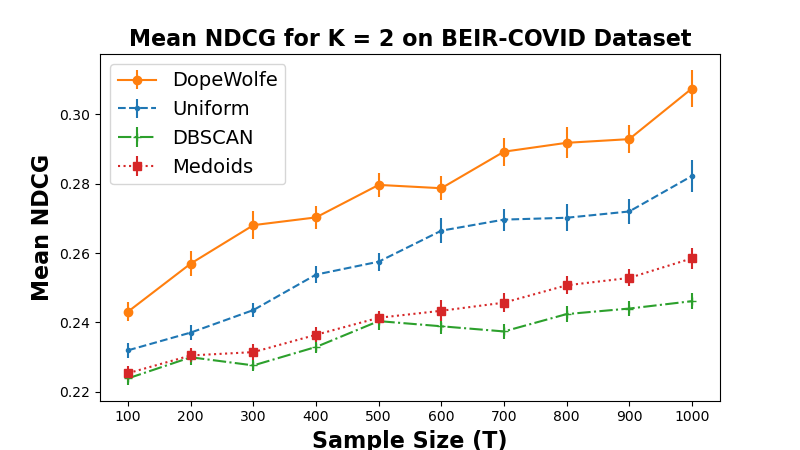}
        \label{fig:subfig2_ndcg}
    }
    \subfigure[TREC-DL, $K=2$]{
        \includegraphics[width=0.31\textwidth]{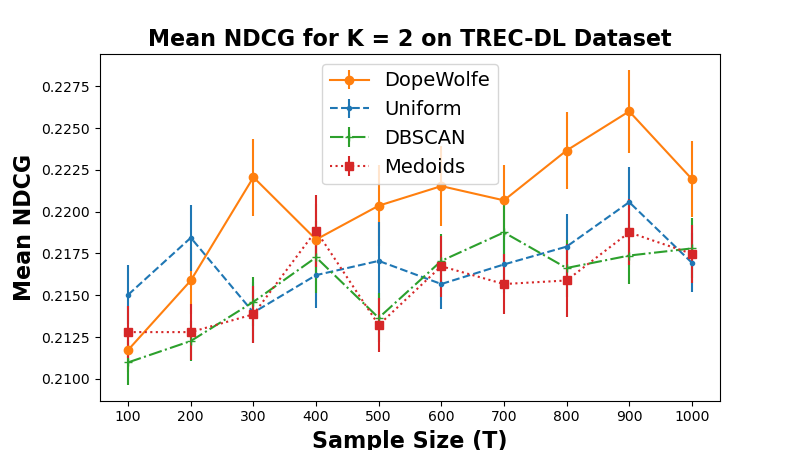}
        \label{fig:subfig1_ndcg}
    }
    \subfigure[NECTAR, $K=2$]{
        \includegraphics[width=0.31\textwidth]{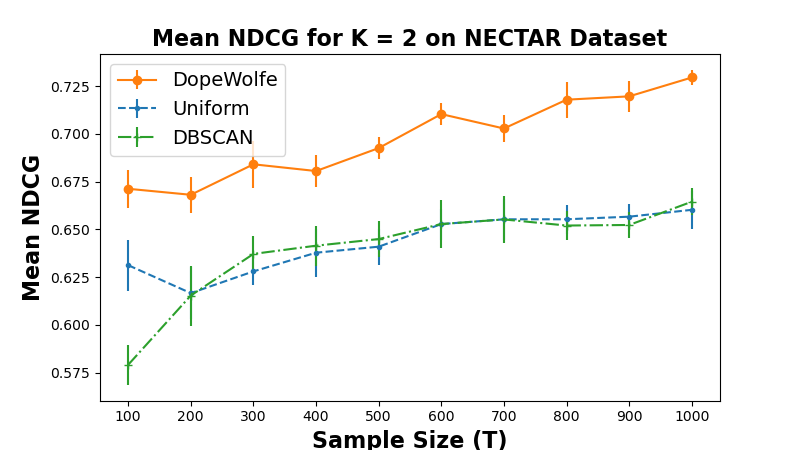}
        \label{fig:subfig3_ndcg}
    }
    \subfigure[BEIR-COVID, $K=3$]{
        \includegraphics[width=0.31\textwidth]{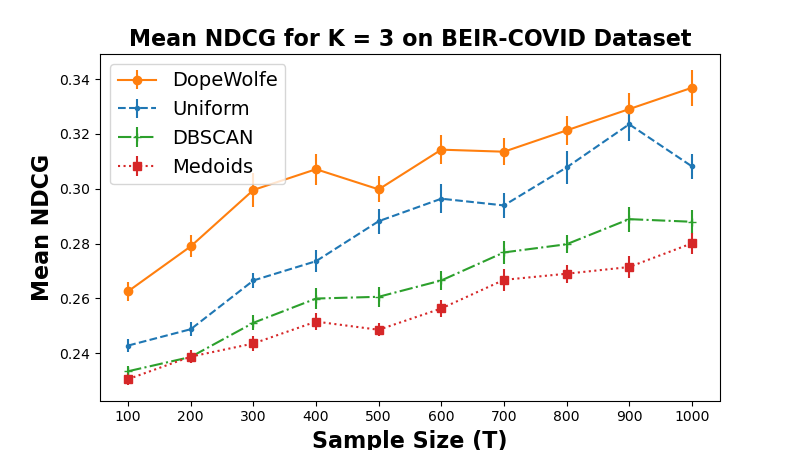}
        \label{fig:subfig4_ndcg}
    }
    \subfigure[TREC-DL, $K=3$]{
        \includegraphics[width=0.31\textwidth]{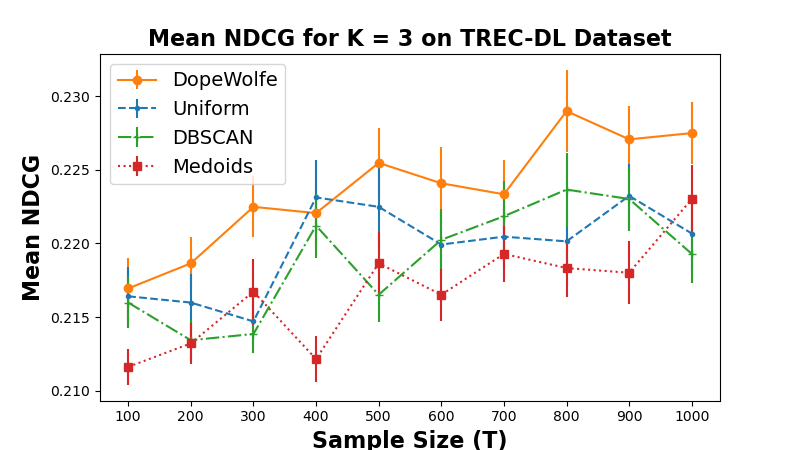}
        \label{fig:subfig5_ndcg}
    }
    \subfigure[NECTAR, $K=3$]{
        \includegraphics[width=0.31\textwidth]{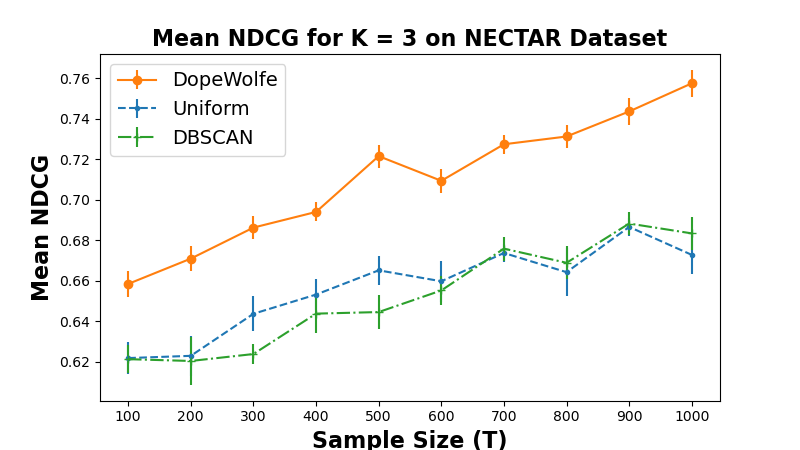}
        \label{fig:subfig6_ndcg}
    }
    \caption{Mean NDCG metric (higher is better) on BEIR-COVID, TREC-DL and NECTAR datasets with synthetic feedback.}
    \label{fig:synthndcg}
\end{figure}

We next show the ranking loss~\eqref{eq:ranking loss} (lower is better) and NDCG metric (higher is better) for $K=4$ for BEIR-COVID and TREC-DL datasets in Figure~\ref{fig:synthk4} for the synthetic setup (Section~\ref{ssec:synthetic}. Note that, for NECTAR dataset, since the number of answers for each question are only seven, the number of possible subsets are same for $K=3$ and $K=4$; thus, we ignore NECTAR dataset for $K=4$. We again observe that \ouralgo achieves better performance than baselines on the ranking task.

\begin{figure}[t]
    \centering
    \subfigure[BEIR-COVID, $K=4$]{
        \includegraphics[width=0.31\textwidth]{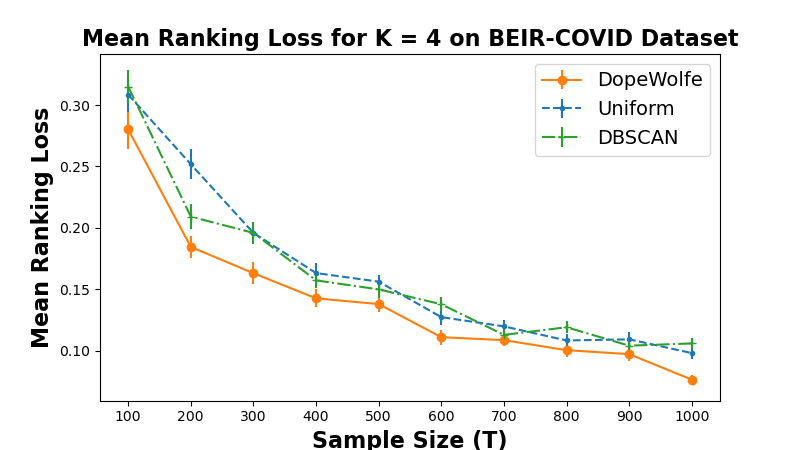}
        \label{fig:subfig2_k4}
    }
    \subfigure[BEIR-COVID, $K=4$]{
        \includegraphics[width=0.31\textwidth]{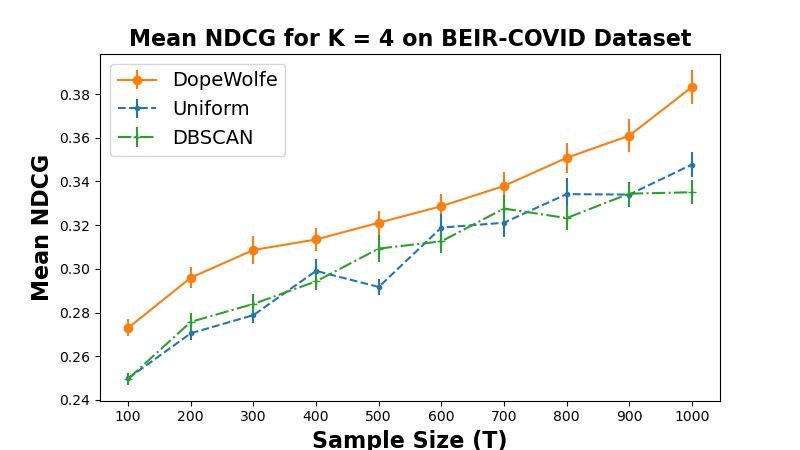}
        \label{fig:subfig1_k4}
    } \\
    \subfigure[TREC-DL, $K=4$]{
        \includegraphics[width=0.31\textwidth]{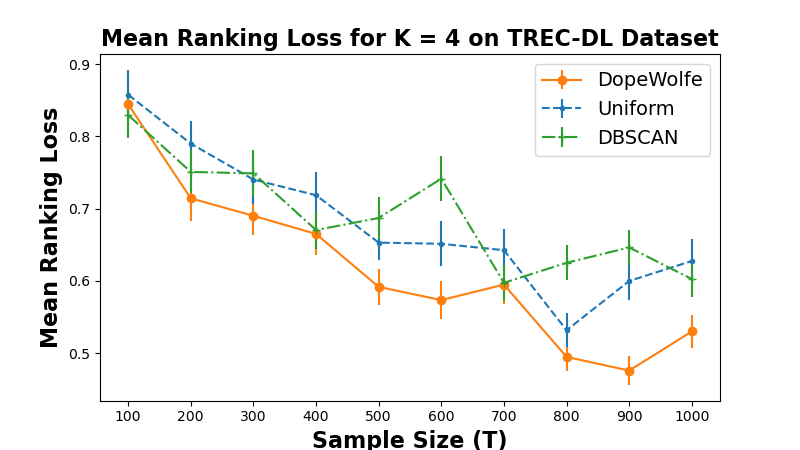}
        \label{fig:subfig3_k4}
    }
    \subfigure[TREC-DL, $K=4$]{
        \includegraphics[width=0.31\textwidth]{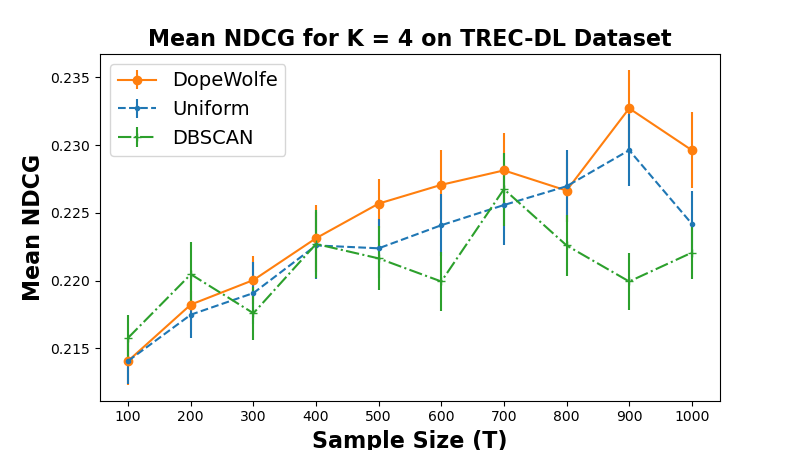}
        \label{fig:subfig4_k4}
    }
    \caption{BEIR-COVID and TREC-DL datasets with synthetic feedback: Mean KDtau and NDCG metric on Datasets for $K=4$. On average, \ouralgo achieves better performance than Uniform sampling and DBSCAN.}
    \label{fig:synthk4}
\end{figure}

\subsection{Real Feedback Setup}
\label{app:realexperiments}

In this section, we provide more details on the experimental choices for the experiment with real feedback (Section~\ref{ssec:real}). For both BEIR-COVID and TREC-DL datasets, since the feedback is real, we use more representative 1024 dimensional BGE-M3 embedding~\cite{bge-m3} to ensure that the features are rich and reasonable enough to capture the true ranking (realizable ranking problem). We then apply Principal Component Analysis (PCA)~\cite{abdi2010principal} so that the feature matrix is full rank. We end up with 98 and 78 features for BEIR-COVID and TREC-DL datasets, respectively. We then run \ouralgo on the datasets by setting $\gamma=10^{-6}$, $\alpha_{\mathrm{tol}}=10^{-16}$, and $R = \min({100 \choose K}, 10^5)$ and initializing it with a uniformly chosen single subset, i.e.~$\texttt{nnz}(\pi_0) = 1$. We find that the \ouralgo converges in 200 iterations for BEIR-COVID and in 1500 iterations for TREC-DL. Since the feedback is real and does not change whenever we sample a new $K$-way subset for feedback, we pick the top $K$-way subsets that have the highest probability mass in the sampling distribution obtained through \ouralgo. Note that due to significant amount of ranking ties in the BEIR-COVID dataset, we use the MLE for the pairwise rank breaking of the observations \cite[Equation 39, Page 25]{negahban2018learning} instead of the $K$-wise MLE \eqref{eq:ranking loglik}. We found that choosing  only top-25 $K$-way subsets for BEIR-COVID  works well, possibly due to the simpler nature of the problem as a consequence of the ties. For TREC-DL dataset, we choose top-500 $K$-way subsets. Lastly, we run the Plackett-Luce model's maximum likelihood estimation process using gradient descent with Barzilai-Borwein stepsize rule \cite{barzilai1988two} with min and max stepsize clip values of $10^{-8}$ and $5\times10^{4}$/$10^{3}$ for BEIR-COVID/TREC-DL, which converges in 200/1000 iterations for BEIR-COVID/TREC-DL. All the reported metrics provide mean and standard error over over 10 trials.

The ranking loss~\eqref{eq:ranking loss} results for this experiment is shown in Figure~\ref{fig:real_feedback_exp}. We also show NDCG@10 for this experiment in Figure~\ref{fig:real_feedback_ndcg}. We choose NDCG@10 to show the accuracy at the top of the ranking. In the NDCG computation, we use linear gain function for BEIR-COVID, and an exponential gain with temperature 0.1 for TREC-DL with the aim of differentiating its very close score values. We see that \ouralgo performs better than the baselines for the ranking tasks in both datasets. 

\begin{figure}[t]
    \centering
    \subfigure[BEIR-COVID]{
        \includegraphics[width=0.31\textwidth]{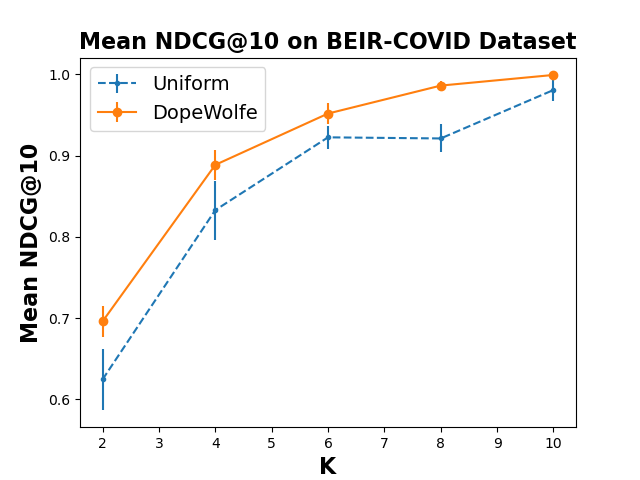}
        \label{fig:real_dl_subfig1_ndcg}
    }
    \subfigure[TREC-DL]{
        \includegraphics[width=0.31\textwidth]{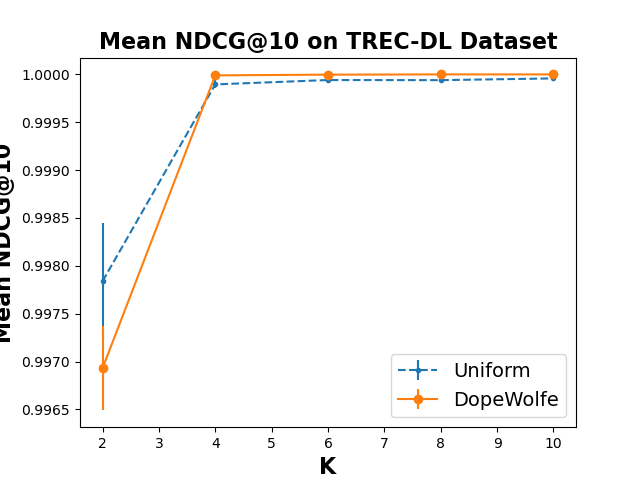}
        \label{fig:real_dl_subfig2_ndcg}
    }
    \vspace{1cm}
    \caption{BEIR-COVID and TREC-DL datasets with real feedback: Models learned through ranking feedback collected on \ouralgo samples achieve higher NDCG@10 than the ones learned on Uniformly selected samples.}
    \label{fig:real_feedback_ndcg}
\end{figure}

\end{document}